\documentclass{article}
\usepackage[utf8]{inputenc}
\usepackage{textcomp}
\DeclareUnicodeCharacter{2212}{\textminus}

\usepackage{microtype}
\usepackage{graphicx}
\usepackage{subcaption}
\usepackage{booktabs} 
\usepackage{enumitem}
\usepackage{hyperref}
\hypersetup{colorlinks=true, urlcolor=blue}

\usepackage{makecell}
\usepackage[table]{xcolor}

\usepackage[preprint]{icml2026}

\usepackage{amsmath}
\usepackage{amssymb}
\usepackage{mathtools}
\usepackage{amsthm}
\usepackage{comment}

\usepackage[capitalize,noabbrev]{cleveref}

\theoremstyle{plain}
\newtheorem{theorem}{Theorem}[section]
\newtheorem{proposition}[theorem]{Proposition}

\theoremstyle{definition}

\theoremstyle{remark}

\usepackage[textsize=tiny]{todonotes}

\icmltitlerunning{Train Short, Inference Long: Training-free Horizon Extension for Autoregressive Video Generation}
\begin{document}


\twocolumn[
  \icmltitle{Train Short, Inference Long: Training-free Horizon Extension for Autoregressive Video Generation}



  \icmlsetsymbol{equal}{*}
  \icmlsetsymbol{corres}{†}

  \begin{icmlauthorlist}
    \icmlauthor{Jia Li}{equal,hku,comp}
    \icmlauthor{Xiaomeng Fu}{equal,cas}
    \icmlauthor{Xurui Peng}{comp}
    \icmlauthor{Weifeng Chen}{comp}
    \icmlauthor{Youwei Zheng}{comp}
    \icmlauthor{Tianyu Zhao}{comp}
    \icmlauthor{Jiexi Wang}{comp}
    \icmlauthor{Fangmin Chen}{comp}
    \icmlauthor{Xing Wang}{corres,comp}
    \icmlauthor{Hayden Kwok-Hay So}{corres,hku}
  \end{icmlauthorlist}

  \icmlaffiliation{hku}{The University of Hong Kong}
  \icmlaffiliation{comp}{ByteDance}
  \icmlaffiliation{cas}{Institute of Information Engineering, Chinese Academy of Sciences}

  \icmlcorrespondingauthor{Xing Wang}{wangxing.613@bytedance.com}
  \icmlcorrespondingauthor{Hayden Kwok-Hay So}{hso@eee.hku.hk}

  \icmlkeywords{Machine Learning, ICML}

  \vskip 0.3in
]



\printAffiliationsAndNotice{\icmlEqualContribution}

\begin{abstract}
Autoregressive video diffusion models have emerged as a scalable paradigm for long video generation. However, they often suffer from severe extrapolation failure, where rapid error accumulation leads to significant temporal degradation when extending beyond training horizons. We identify that this failure primarily stems from the \textit{spectral bias} of 3D positional embeddings and the lack of \textit{dynamic priors} in noise sampling. To address these issues, we propose \textbf{FLEX} (\textbf{F}requency-aware \textbf{L}ength \textbf{EX}tension), a training-free inference-time framework that bridges the gap between short-term training and long-term inference. FLEX introduces Frequency-aware RoPE Modulation to adaptively interpolate under-trained low-frequency components while extrapolating high-frequency ones to preserve multi-scale temporal discriminability. This is integrated with Antiphase Noise Sampling (ANS) to inject high-frequency dynamic priors and Inference-only Attention Sink to anchor global structure. Extensive evaluations on VBench demonstrate that FLEX significantly outperforms state-of-the-art models at $6\times$ extrapolation (30s duration) and matches the performance of long-video fine-tuned baselines at $12\times$ scale (60s duration). As a plug-and-play augmentation, FLEX seamlessly integrates into existing inference pipelines for horizon extension. It effectively pushes the generation limits of models such as LongLive, supporting consistent and dynamic video synthesis at a 4-minute scale. Project page is available at \href{https://ga-lee.github.io/FLEX_demo}{https://ga-lee.github.io/FLEX}.


\end{abstract}

\section{Introduction}
Video generation is currently undergoing a paradigm shift from short clip synthesis to long video generation~\cite{ho2022videodiffusionmodels,blattmann2023stablevideodiffusionscaling,chen2025sanavideoefficientvideogeneration,kodaira2025streamditrealtimestreamingtexttovideo,song2025history}. To overcome the computational and memory bottlenecks of bidirectional attention, Autoregressive (AR) diffusion models~\cite{villegas2022phenakivariablelengthvideo,ai2025magi1autoregressivevideogeneration,chen2025skyreelsv2infinitelengthfilmgenerative,sun2025ardiffusionasynchronousvideogeneration}, such as Self Forcing~\cite{huang2025self}, have emerged as a prominent architecture due to the superior inference efficiency. By generating in a chunk-wise or frame-wise manner while maintaining a rolling KV cache, these models theoretically enable infinite-length video generation.

However, AR models often face severe \textit{extrapolation challenges} in practice. When the inference horizon significantly exceeds the predefined training or self-rollout range, models often fall into out-of-distribution (OOD) regions, manifesting as temporal drift, visual artifacts, and motion issues. Existing solutions such as Streaming Long Tuning~\cite{yang2025longlive} mitigate these issues via fine-tuning on self-rollout long clips. Nevertheless, these methods incur prohibitive computational costs and remain bounded by the ``train long, inference long" constraint~\cite{press2022trainshorttestlong}. Consequently, a training-free framework for extending the temporal horizon of pretrained AR models is of great significance.

In this work, we identify two primary failure modes underlying horizon extension in autoregressive video diffusion models from an inference perspective. (i) \textbf{Spectral Bias in Positional Embeddings.} We observe that different frequency components of 3D RoPE exhibit highly imbalanced training exposure over limited training sequences. Specifically, low-frequency components are often under-trained, failing to generalize to the expanded coordinates required for long-term inference. While standard Position Interpolation (PI) remaps these components into trained horizons, its uniform compression compromises high-frequency components, which are essential for fine-grained temporal discriminability. (ii) \textbf{Lack of Dynamic Priors in Noise Sampling.} Existing positive-correlation noise initialization strategie\cite{lu2025freelong++, gao2025longvie, qiu2023freenoise} enhance temporal consistency at the cost of content diversity and motion dynamics, leading to insufficient motion energy and gradual dynamic degradation in extended sequences.

To bridge the gap between short-term training and long-term inference, we propose \textbf{FLEX} (\textbf{F}requency-aware \textbf{L}ength \textbf{EX}tension), a comprehensive training-free inference-time framework that comprises three core components: (1) Frequency-aware 3D RoPE Modulation: Inspired by \textit{NTK-by-parts} in Large Language Models~(LLMs), we adaptively interpolate low-frequency temporal dimensions to stabilize global structure, while extrapolating high-frequency components with superior generalization to preserve fine-grained temporal detail. (2) Antiphase Noise Sampling (ANS): We introduce a structured noise initialization that redistributes noise power toward higher frequencies, which effectively seeds the latent space with temporal dynamic variations. (3) Inference-only Attention Sink: We preserve the initial frames as temporal anchors within the local window, to maintain global semantic consistency without modifications to model architectures or training process.

The main contributions of this work are three-fold:

\begin{itemize}

\item
We systematically analyze two key inference-time failure modes that fundamentally limit horizon extension in autoregressive video diffusion models: 
(i) the spectral bias of 3D RoPE caused by imbalanced frequency-wise training exposure, and 
(ii) the lack of dynamic priors in noise initialization sampling. 

\item 
We propose \textbf{FLEX}, a training-free and inference-time framework consisting of
Frequency-aware RoPE Modulation to preserve multi-scale temporal discriminability, 
Antiphase Noise Sampling to inject high-frequency dynamic priors, 
and inference-only Attention Sink to maintain global structure over time.

\item 
Extensive evaluations on VBench-Long demonstrate that FLEX significantly outperforms state-of-the-art methods on the 30 second ($6\times$ extrapolation) task, and achieves performance comparable to long-video fine-tuned approaches at the 60-second ($12\times$ extrapolation). Moreover, FLEX generalizes well as a plug-and-play augmentation that consistently improves both temporal consistency and dynamics when integrated into existing autoregressive models such as LongLive, supporting robust minute-level video generation.

\end{itemize}

\section{Related Works}

\paragraph{Autoregressive Video Diffusion.} 
To bypass the quadratic complexity of bidirectional attention, recent research adopts the autoregressive paradigm for long-sequence synthesis. CasuVid \cite{yin2025slow} distills non-causal models like Wan2.1 \cite{wan2025wan} into a causal, few-step model with DMD~\cite{yin2024one}. Self Forcing \cite{huang2025self} introduces \textit{self-rollout} sampling to mitigate the \textit{exposure bias} between training and inference. Building upon this, LongLive \cite{yang2025longlive} employs Streaming Long Tuning on extended self-rollout sequences and employs KV-recache for real-time interactive generation. Rolling Forcing \cite{liu2025rolling} introduces a joint denoising scheme, which simultaneously processing frames at different denoising levels. Despite their progress, these methods remain constrained by training horizons. When inference extends into out-of-distribution temporal ranges, rapid error accumulation inevitably leads to degraded generation quality, temporal inconsistency, and unnatural motion patterns.

\paragraph{Long Context Extension in RoPE.} 

Rotary Positional Embedding (RoPE) \cite{su2024roformer} is widely used to encode relative positions in Transformer-based architectures~\cite{touvron2023llama}. To handle sequences exceeding the pretraining horizon, the LLM community has developed various RoPE extension techniques\cite{zhong2024understandingropeextensionslongcontext}. Beyond linear Position Interpolation (PI) \cite{chen2023extending}, Neural Tangent Kernel (NTK) theory \cite{jacot2018neural} has inspired non-uniform scaling methods like NTK-aware interpolation \cite{bloc97ntk} to prevent high-frequency information loss. Subsequent refinements, such as NTK-by-parts \cite{bloc97parts}, YaRN \cite{peng2023yarn} and LongRoPE~\cite{ding2024longropeextendingllmcontext}, modulate frequency components to preserve local discriminability while interpolating low-frequency dimensions for global structure. 
While 1D RoPE extension techniques have demonstrated remarkable success in context extension in LLMs, the inherent properties of 3D RoPE—despite recent explorations in re-anchoring temporal coordinates via relativistic shifts \cite{yesiltepe2025infinityropeactioncontrollableinfinitevideo}—as well as its specific spectral characteristics in long-sequence video modeling, have yet to be fully explored.

\paragraph{Temporal Priors and Noise Sampling.} 
Maintaining consistency in long-horizon generation often involves injecting temporal priors through noise regularization. FreeNoise \cite{qiu2023freenoise} introduces a training-free paradigm using noise rescheduling and temporal shifting to stabilize long-range structures. Similarly, AnimateDiff \cite{guo2023animatediff} employs group-based noise sampling to maintain feature coherence during inference. While these methods improve spatio-temporal smoothness, they rely heavily on \textit{positive noise correlation}. In extended autoregressive generation, this strong correlation tends to over-stabilize the latent space and results in monotonous, low-dynamic visual content.

\section{Method}

\subsection{Preliminary}
\label{sec:preliminary}

In chunk-wise autoregressive video diffusion generation, an inference sequence of length $L$ is decomposed into $N = L/f$ successive chunks $\mathbf{C} = \{ \mathcal{C}_i \}_{i=1}^N$, where each chunk $\mathcal{C}_i \in \mathbb{R}^{f \times H \times W \times C}$ contains $f$ latent frames. Each chunk is generated sequentially, with its denoising process conditioned on the historical context maintained by sliding local window implemented through KV caches.

\paragraph{Intra-chunk Independent Noise Sampling.}
The generation of each chunk $\mathcal{C}_i$ originates from a set of initial noise latents at diffusion timestep $T$, denoted as $\mathbf{Z}_T^{(i)} = \{ \mathbf{z}_T^{(i, 1)}, \dots, \mathbf{z}_T^{(i, f)} \}$. In standard inference protocols, the $f$ latent frames within each chunk are typically sampled independently from a standard isotropic Gaussian distribution:
\begin{equation} \mathbb{E} \left[ \mathbf{z}_T^{(i,u)} (\mathbf{z}_T^{(i,v)})^\top \right] = \mathbf{0}, \quad \forall u \neq v \in \{1, \dots, f\},
\end{equation}
where $\mathbf{z}_T^{(i, \cdot)} \sim \mathcal{N}(\mathbf{0}, \mathbf{I})$. This intra-chunk independence serves as the stochastic prior, providing the initial latent layout for the subsequent iterative denoising steps.

\paragraph{Causal Attention with Local Window.}

The denoising process for the $i$-th chunk is formulated as:
\begin{equation}
    \mathbf{z}_{t-1}^{(i)} = \mathcal{G}_{\theta} \left( \mathbf{z}_t^{(i)} \mid \mathbf{\Phi}_{<i}, \mathbf{c}, t \right),
\end{equation}
where $\mathcal{G}_{\theta}$ denotes the denoising network, $\mathbf{c}$ represents external conditions such as text prompts, and $\mathbf{\Phi}_{<i}$ denotes the historical context. Due to computational constraints, current architectures often employ a sliding window causal attention of size $w$. The historical context $\mathbf{\Phi}_{<i}$ is thus truncated to the most recent $w-f$ frames: $\mathbf{\Phi}_{<i} = \text{KV} ( \{ \mathbf{z}_0^{(j, k)} \} )$, where the frame indices $(j, k)$ are constrained within the temporal interval $[ (i-1)f - (w-f) + 1, (i-1)f ]$. 

\paragraph{3D Rotary Positional Embedding.}
Modern video diffusion Transformers (e.g., Wan2.1~\cite{wan2025wan}) utilize 3D Rotary Positional Embedding (RoPE) to encode spatio-temporal dependencies. The $d$-dimensional embedding is decomposed into height, width, and temporal subspaces: $d = d_H + d_W + d_F$. As our work focuses on temporal extrapolation, we analyze the temporal subspace $d_F$, which consists of $d_F/2$ orthogonal rotary planes.

For a temporal query component $\mathbf{q}$ at global index $n$, the positional encoding is defined as $f(\mathbf{q}, n, \theta) = \mathbf{R}_{\Theta, n} \mathbf{q}$. For each $m$-th rotary plane ($m \in [0, \frac{d_F}{2}-1]$), the rotary operator transforms component pairs via:
\begin{equation}
    \begin{pmatrix} \mathbf{q}'_{2m} \\ \mathbf{q}'_{2m+1} \end{pmatrix} = 
    \begin{pmatrix} \cos(n \theta_m) & -\sin(n \theta_m) \\ \sin(n \theta_m) & \cos(n \theta_m) \end{pmatrix}
    \begin{pmatrix} \mathbf{q}_{2m} \\ \mathbf{q}_{2m+1} \end{pmatrix},
\end{equation}
where $\theta_m = b^{-2m/d_F}$ with base $b = 10000$. This ensures \textit{shift-invariance}, where the attention score depends solely on the relative distance $\Delta n = n_q - n_k$:
\begin{equation} \langle f(\mathbf{q}, n_q, \theta), f(\mathbf{k}, n_k, \theta) \rangle = \mathbf{q}^\top \mathbf{R}_{\Delta n} \mathbf{k} \end{equation}

\begin{figure}[t]
  \centering
  \includegraphics[width=\columnwidth]{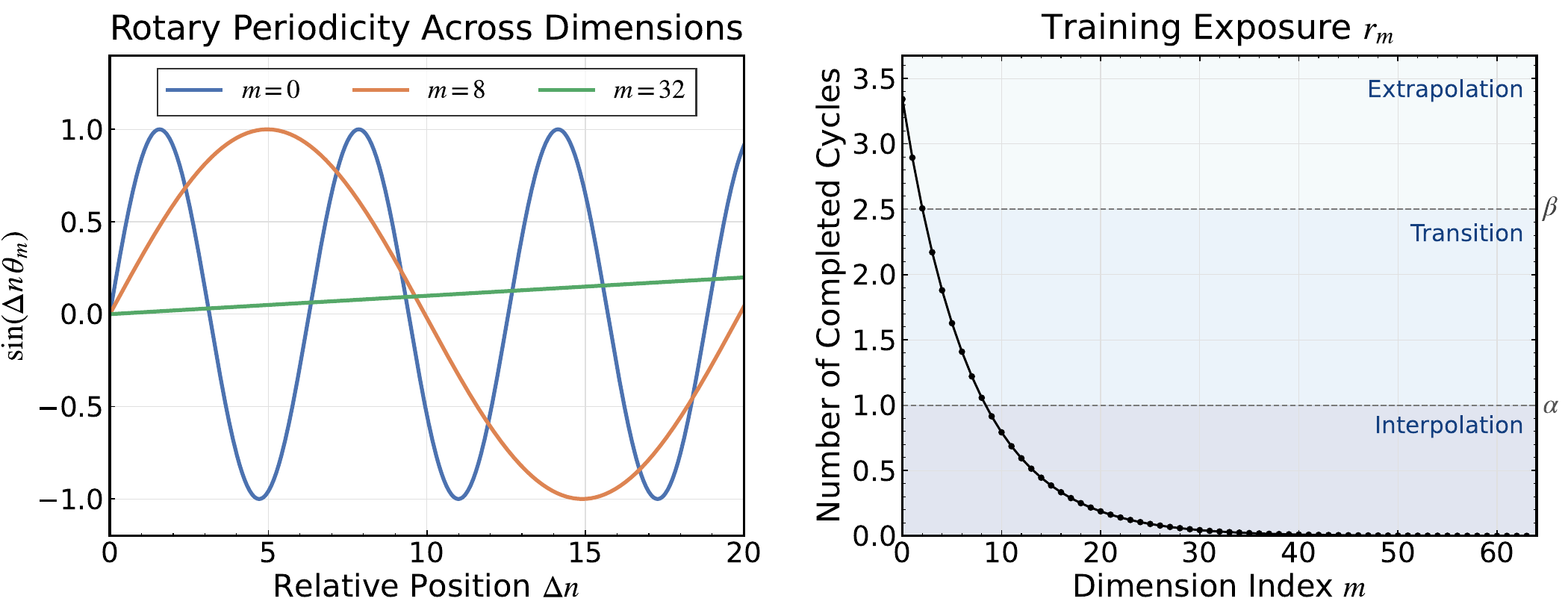}
\caption{\textbf{Frequency-aware analysis of temporal RoPE.} 
 (Left) Visualization of rotary periodicity $\sin(\Delta n \theta_m)$ along the temporal dimension. (Right) Training exposure $r_m$, defined as the number of completed cycles given a training horizon of $L_{train}=21$. }
  \label{fig:temporal_rope}
  \vspace{-0.5em}
\end{figure}

\subsection{Frequency-aware 3D RoPE Modulation} 
\label{sec:ntk}

\paragraph{Dimension-Frequency-Wavelength Mapping.}
We defined the temporal rotation period or wavelength at dimension index $m$ as $\lambda_m = 2\pi / \theta_m = 2\pi \cdot b^{2m/d_F}$. Physically, $\lambda_m$ represents the number of frames required for the $m$-th rotation group on temporal dimension to complete one full $2\pi$ rotation. This mapping dictates the model's sensitivity to different temporal granularities (see Fig.~\ref{fig:temporal_rope}  left): (1) \textbf{High-Frequency / Short-Wavelength (Small $m$)}: Low-indexed dimensions rotate rapidly with respect to position $n$, capturing fine-grained, local temporal dependencies. (2) \textbf{Low-Frequency / Long-Wavelength (Large $m$)}: High-indexed dimensions evolve slowly across time, responsible for encoding long-term temporal position relationship.

\paragraph{Problem Definition: Training Inference Gap.} While RoPE is theoretically shift-invariant, its empirical effectiveness is bound by the range of relative phases encountered during training, defined by the domain $\mathcal{D}_{\Delta} = \{ \Delta n \theta_m \mid \Delta n \in [0, L_{\text{train}}-1], m \in [0, \frac{d_F}{2}-1] \}$. When the inference length reaches to $L'$ and relative distance $\Delta n$ far exceeds $L_{\text{train}}$, the standard encoding $f(\mathbf{q}_F, n, \theta)$ encounters frequency-dependent extrapolation failures: (1) \textbf{High-Frequency Dimensions ($\lambda_m \ll L_{\text{train}}$):} These components undergo multiple full rotation cycles during training, resulting in high \textit{training exposure}. This sufficient training allows the model to better generalize to extrapolated distances, maintaining stable positional priors even when $\Delta n > L_{\text{train}}$. (2) \textbf{Low-Frequency Dimensions ($\lambda_m > L_{\text{train}}$):} Due to their long wavelengths, these components fail to complete even a single $2\pi$ rotation within the training horizon, suffering from \textit{insufficient training exposure}. This leads to significant performance degradation in the under-trained extrapolation regime ($\Delta n > L_{\text{train}}$).

A common baseline for context extension is Position Interpolation (PI), which linearly scales extended position indices $n$ by $S = L/L_{\text{train}}$ to match the training horizon:
\begin{equation}
f_{\text{PI}}(\mathbf{q}, n, \theta) = f\left(\mathbf{q}, g(n) = \frac{n}{S}, h(\theta_m) = \theta_m\right)
\end{equation}
While PI works well for LLMs~\cite{chen2023extending}, our experiment indicates it fails for video generation task. As $S$ increases, the generated video becomes nearly static with a rapid decline in image quality during autoregressive inference (see Appendix~\ref{sec:appendix_pi} for experiment details).

We attribute this failure to the uniform compression of Fourier space. By scaling down all dimensions equally, PI reduces the phase difference between adjacent frames, diminishing the model's ability to distinguish fine-grained temporal order. According to Neural Tangent Kernel (NTK) theory \cite{jacot2018neural}, deep networks exhibit a spectral bias, suggesting that high-frequency components are essential for maintaining temporal discriminability and should be compressed with frequency-specific, multiscale strategy.


Inspired by NTK-by-parts~\cite{peng2023yarn}, a classical context-window extension method in LLM community (e.g., Qwen3~\cite{yang2025qwen3}), we propose temporal frequency-aware NTK-by-parts 3D RoPE modulation for long video generation beyond training range. First, we quantify the training sufficiency of the $m$-th rotary dimension  as \textit{training exposure }($r_m$), physically defined as the number of complete rotation cycles observed during training, 
\begin{equation}
    r_m = \frac{L_{\text{train}}}{\lambda_m} = \frac{L_{\text{train}} \cdot \theta_m}{2\pi}
\end{equation}
Unlike LLMs trained on kilo-token contexts, video diffusion models are typically restricted short training clips (e.g., 21 latent frames). This leads to a wide lack of temporal exposure. As shown in Fig.~\ref{fig:temporal_rope} right, most dimensions exhibit $r_m < 1$, failing to complete a single rotation cycle during training. 
Then we reformulate temporal RoPE function as,
\begin{equation}
    f_{\text{parts}}(\mathbf{q}, n, \theta) = f(\mathbf{q}, g(n), h(\theta_m))
\end{equation}
where $g(n) = n$ preserves the original temporal index $n$. The frequency modulation $h(\theta_m)$ is governed by a gating function $g(r)$ that partitions the $m$ dimensions into three regimes based on the training exposure $r$ (illustrated by the different colored regions in Fig.~\ref{fig:temporal_rope}):
\begin{equation}
g(r_m) = 
\begin{cases} 
0, & \text{if } r_m < \alpha \\
1, & \text{if } r_m > \beta \\
\frac{r_m - \alpha}{\beta - \alpha}, & \text{otherwise}
\end{cases}
\end{equation}
The modified frequency $h(\theta_m)$ for each dimension is then calculated as a weighted combination:
\begin{equation}
    h(\theta_m) = (1 - g(r_m)) \frac{\theta_m}{S} + g(r_m) \theta_m
\end{equation}

The dynamic scaling factor is defined as the ratio of inference length $L$ to training length $L_{\text{train}}$, then $S = \max(1, L / L_{\text{train}})$.  This formulation ensures that for $L \le L_{\text{train}}$, no modification is applied ($S=1$), thus preserving the model's original performance within the training horizon. For $L > L_{\text{train}}$, a frequency-specific RoPE modulation on the temporal dimension $d_f$ with two hyperparameter $\alpha$ and $\beta$ is conducted: 
(i) \textbf{High-Frequency Extrapolation ($r_m > \beta$) } preserves original rotation angles to ensure the fine-grained temporal discriminability. 
(ii) \textbf{Low-Frequency Interpolation ($r_m < \alpha$)} remaps under-trained components back into the familiar phase domain to maintain global position embedding capability at extended horizons.

\subsection{Antiphase Noise Sampling}
\label{sec:ans}

The initial noise state provides a foundational prior for motion synthesis in diffusion models~\cite{chen2023control}. While common initialization strategies~\cite{qiu2023freenoise} employ positive temporal correlations to enhance consistency, such priors often lead to monotonous content and diminished motion dynamics over long horizons. To restore temporal variation and enrich dynamic complexity in extended sequences, we introduce Antiphase Noise Sampling (ANS), a structured initialization strategy that reshapes the noise distribution to inject high-frequency temporal priors.

\noindent \textbf{Mathematical Formulation.} We consider a chunk of length $f$ consisting of latent frames $Z = [z_0, z_1, \dots, z_{f-1}]^\top \in \mathbb{R}^{f \times d}$. To inject a controllable temporal prior, we parameterize the intra-chunk correlation via a first-order autoregressive (AR(1)) process:
\begin{align}
    \mathbf{z}_0 &\sim \mathcal{N}(\mathbf{0}, \mathbf{I}), \label{eq:ans_init} \\
    \mathbf{z}_u &= \rho \mathbf{z}_{u-1} + \sqrt{1-\rho^2} \boldsymbol{\epsilon}_u, \quad \boldsymbol{\epsilon}_u \sim \mathcal{N}(\mathbf{0}, \mathbf{I}), \label{eq:ans_recursion}
\end{align}
where $u \in \{1, \dots, f-1\}$ and $\rho \in [-1, 1]$ controls the correlation across frames. Unlike consistency-oriented methods, we adopt the antiphase regime ($\rho < 0$), where adjacent noise perturbations exhibit negative covariance. This ensures that the initial latent state possesses high temporal variance, serving as a seed for dynamic motion.

\begin{figure}[t!] 
  \centering
  \includegraphics[width=\columnwidth]{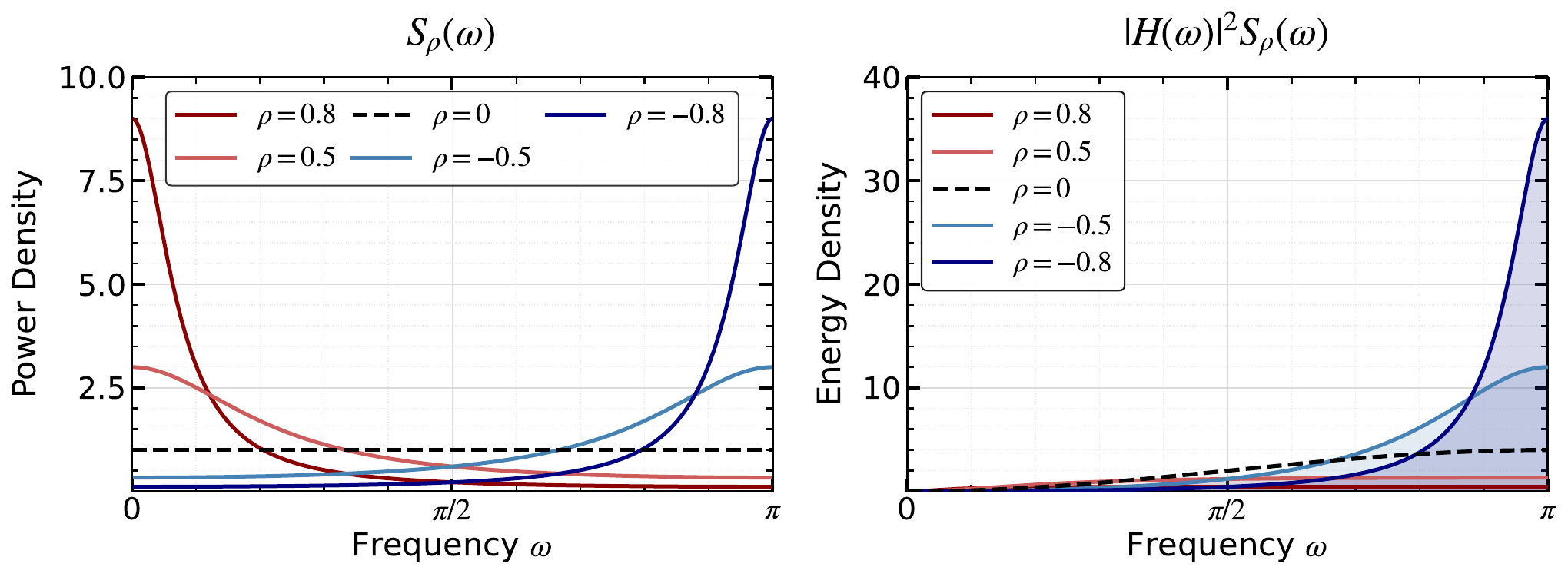}
     \caption{\textbf{Spectral analysis of Antiphase Noise Sampling.} (Left) Power Spectral Density $S_{\rho}(\omega)$. (Right) Motion Energy Density $|H(\omega)|^2 S_{\rho}(\omega)$. The antiphase regime ($\rho < 0$) shifts power toward the high-frequency passband, boosting motion energy relative to the i.i.d. baseline ($\rho=0$).}
  \label{fig:spectral_analysis}
  \vspace{0em}
\end{figure}
\begin{proposition}[Distributional Invariance and Structure]
\label{prop:ans_properties}
The ANS construction in Eq.~\eqref{eq:ans_recursion} ensures that:(i) \textbf{Marginal Preservation:} Each frame maintains the standard Gaussian distribution $\mathbf{z}_u \sim \mathcal{N}(\mathbf{0}, \mathbf{I})$, since:
\begin{equation}
\begin{aligned}
    \text{Cov}(\mathbf{z}_u) &= \rho^2 \text{Cov}(\mathbf{z}_{u-1}) + (1-\rho^2)\text{Cov}(\boldsymbol{\epsilon}_u) \\
    &= \rho^2 \mathbf{I} + (1-\rho^2)\mathbf{I} = \mathbf{I}. 
\end{aligned}
\label{eq:cov_derivation}
\end{equation}
(ii) \textbf{Toeplitz Covariance:} The temporal correlation follows a symmetric Toeplitz structure, $\text{Cov}(\mathbf{z}_u, \mathbf{z}_v) = \rho^{|u-v|}\mathbf{I}$.
\end{proposition}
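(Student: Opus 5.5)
The plan is to argue by induction on the frame index $u$, using only two facts from Eqs.~\eqref{eq:ans_init}--\eqref{eq:ans_recursion}. First, the innovations $\boldsymbol{\epsilon}_1,\dots,\boldsymbol{\epsilon}_{f-1}$ are i.i.d.\ $\mathcal{N}(\mathbf{0},\mathbf{I})$. Second, they are independent of $\mathbf{z}_0$. Unrolling the recursion gives
\[
\mathbf{z}_u=\rho^u\mathbf{z}_0+\sqrt{1-\rho^2}\sum_{k=1}^{u}\rho^{u-k}\boldsymbol{\epsilon}_k .
\]
So $\mathbf{z}_u$ is a measurable function of $(\mathbf{z}_0,\boldsymbol{\epsilon}_1,\dots,\boldsymbol{\epsilon}_u)$. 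In particular, $\boldsymbol{\epsilon}_u$ is independent of $\mathbf{z}_{u-1}$ and of every earlier frame. This independence is the one structural fact that all later steps rely on.

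\textbf{Step 1: Gaussianity and marginals.} The stacked vector $(\mathbf{z}_0,\dots,\mathbf{z}_{f-1})$ is a fixed linear map applied to the jointly Gaussian vector $(\mathbf{z}_0,\boldsymbol{\epsilon}_1,\dots,\boldsymbol{\epsilon}_{f-1})$. Hence it is jointly Gaussian with zero mean, since every summand has mean zero. It therefore suffices to identify covariances. For the marginal, induct on $u$. The base case $\text{Cov}(\mathbf{z}_0)=\mathbf{I}$ is Eq.~\eqref{eq:ans_init}. For the inductive step, independence of $\mathbf{z}_{u-1}$ and $\boldsymbol{\epsilon}_u$ kills the cross terms, which yields exactly Eq.~\eqref{eq:cov_derivation}:
\[
\text{Cov}(\mathbf{z}_u)=\rho^2\mathbf{I}+(1-\rho^2)\mathbf{I}=\mathbf{I}.
\]
A zero-mean Gaussian with identity covariance is $\mathcal{N}(\mathbf{0},\mathbf{I})$, which proves (i). The hypothesis $\rho\in[-1,1]$ makes $\sqrt{1-\rho^2}$ real. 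In the degenerate cases $\rho=\pm1$ the recursion reduces to $\mathbf{z}_u=\pm\mathbf{z}_{u-1}$, and the claim still holds.

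\textbf{Step 2: Toeplitz cross-covariance.} Fix $v\le u$ and induct on $u-v$. The case $u=v$ is Step 1. For $u>v$, write $\mathbf{z}_u=\rho\mathbf{z}_{u-1}+\sqrt{1-\rho^2}\boldsymbol{\epsilon}_u$. Because $\boldsymbol{\epsilon}_u$ is independent of $\mathbf{z}_v$ for $v\le u-1$,
\[
\text{Cov}(\mathbf{z}_u,\mathbf{z}_v)=\rho\,\text{Cov}(\mathbf{z}_{u-1},\mathbf{z}_v)=\rho\cdot\rho^{u-1-v}\mathbf{I}=\rho^{u-v}\mathbf{I}.
\]
For $u<v$, use $\text{Cov}(\mathbf{z}_u,\mathbf{z}_v)=\text{Cov}(\mathbf{z}_v,\mathbf{z}_u)^\top$ together with the symmetry of $\rho^{|u-v|}\mathbf{I}$. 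This gives the formula $\rho^{|u-v|}\mathbf{I}$ for all pairs. The $f\times f$ temporal correlation matrix $[\rho^{|u-v|}]_{u,v}$ depends only on $u-v$ and is symmetric, i.e.\ it is symmetric Toeplitz. The full covariance of the chunk is this matrix Kronecker-multiplied with $\mathbf{I}_d$. This proves (ii).

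\textbf{Main obstacle.} There is no deep difficulty. The only point needing care is to state explicitly the independence of $\boldsymbol{\epsilon}_u$ from the past, which is implicit in the paper's notation. Without it, the cross terms $\rho\sqrt{1-\rho^2}\bigl(\text{Cov}(\mathbf{z}_{u-1},\boldsymbol{\epsilon}_u)+\text{Cov}(\boldsymbol{\epsilon}_u,\mathbf{z}_{u-1})\bigr)$ omitted in Eq.~\eqref{eq:cov_derivation} would not vanish. It is also worth noting that (i) requires joint Gaussianity to pass from covariance equal to $\mathbf{I}$ to the distributional claim, which is why Step 1 establishes it first.
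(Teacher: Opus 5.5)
Your proposal is correct and follows the paper's own argument. Part (i) is exactly the inductive covariance recursion in Eq.~\eqref{eq:cov_derivation}, with the cross terms removed because $\boldsymbol{\epsilon}_u$ is independent of $\mathbf{z}_{u-1}$; the paper uses the same fact in Appendix~\ref{subsec:proof_energy} to get $\mathbb{E}[\mathbf{z}_u^\top \mathbf{z}_{u-1}] = \rho d$, and your recursion $\text{Cov}(\mathbf{z}_u,\mathbf{z}_v)=\rho\,\text{Cov}(\mathbf{z}_{u-1},\mathbf{z}_v)$ extends that lag-one computation to all lags. The paper only asserts (ii), citing the standard AR(1) autocovariance $R(k)=\rho^{|k|}\mathbf{I}_d$, and leaves joint Gaussianity implicit, so your write-up is a more complete version of the same route rather than a different one.
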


This invariance is crucial as it ensures ANS remains compatible with pretrained denoisers without introducing out-of-distribution latent shifts, allowing it to function as a plug-and-play module for any AR video model.

\noindent \textbf{Energy Monotonicity and Spectral Seeding.} To quantify the impact of ANS on dynamics, we analyze the \textit{adjacent-difference energy}, defined as the expected variation between consecutive noise frames:
\begin{equation}
    \mathcal{E}_T(\rho) \triangleq \mathbb{E} \left[ \sum_{u=1}^{f-1} \|\mathbf{z}_u - \mathbf{z}_{u-1}\|_2^2 \right].
\end{equation}

\begin{proposition}[Energy Monotonicity]
\label{prop:energy}
For the AR(1) process defined in Eq.~\eqref{eq:ans_recursion}, the adjacent-difference energy is a linear function of the correlation coefficient $\rho$ (detailed proof is provided in Appendix~\ref{subsec:proof_energy}):
\begin{equation}
\mathcal{E}_T(\rho) = 2(f-1)(1-\rho)d.
\end{equation}
The total energy $\mathcal{E}_T(\rho)$ is a strictly decreasing function of $\rho$, where the antiphase regime ($\rho < 0$) yields energy levels higher than i.i.d. baseline ($\rho=0$).
\end{proposition}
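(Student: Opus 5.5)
The plan is to compute the expectation term by term, using linearity and the second-order structure already given in Proposition~\ref{prop:ans_properties}. Expanding the squared norm of each adjacent difference gives
\begin{equation*}
\mathbb{E}\|\mathbf{z}_u - \mathbf{z}_{u-1}\|_2^2 = \mathbb{E}\|\mathbf{z}_u\|_2^2 + \mathbb{E}\|\mathbf{z}_{u-1}\|_2^2 - 2\,\mathbb{E}\big[\mathbf{z}_u^\top \mathbf{z}_{u-1}\big].
\end{equation*}
Each term can be written as a trace of a (cross-)covariance matrix, because all frames have zero mean. This holds for $\mathbf{z}_0$ by Eq.~\eqref{eq:ans_init}, and it propagates through the linear recursion Eq.~\eqref{eq:ans_recursion}. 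So $\mathbb{E}\|\mathbf{z}_u\|_2^2 = \operatorname{tr}\operatorname{Cov}(\mathbf{z}_u)$ and $\mathbb{E}[\mathbf{z}_u^\top\mathbf{z}_{u-1}] = \operatorname{tr}\operatorname{Cov}(\mathbf{z}_u,\mathbf{z}_{u-1})$.

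Next, I would invoke part (i) of Proposition~\ref{prop:ans_properties} to get $\operatorname{Cov}(\mathbf{z}_u)=\mathbf{I}$, so each squared-norm term equals $d$. Rather than relying on the full Toeplitz statement (ii), I would derive the lag-one cross-covariance directly from the recursion:
\begin{equation*}
\operatorname{Cov}(\mathbf{z}_u,\mathbf{z}_{u-1}) = \rho\operatorname{Cov}(\mathbf{z}_{u-1}) + \sqrt{1-\rho^2}\,\operatorname{Cov}(\boldsymbol{\epsilon}_u,\mathbf{z}_{u-1}) = \rho\mathbf{I}.
\end{equation*}
This uses the fact that $\boldsymbol{\epsilon}_u$ is independent of $\mathbf{z}_{u-1}$. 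That independence holds because $\mathbf{z}_{u-1}$ is a measurable function of $\mathbf{z}_0,\boldsymbol{\epsilon}_1,\dots,\boldsymbol{\epsilon}_{u-1}$ only. The cross term therefore equals $\rho d$, and each summand becomes $d + d - 2\rho d = 2(1-\rho)d$.

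Summing over $u=1,\dots,f-1$ gives $\mathcal{E}_T(\rho)=2(f-1)(1-\rho)d$. The monotonicity claim follows from the derivative $\partial_\rho \mathcal{E}_T = -2(f-1)d<0$, which holds whenever $f\ge 2$ and $d\ge 1$; the strictness of the decrease needs the nondegenerate case $f\ge 2$. Comparing with the i.i.d. value gives $\mathcal{E}_T(\rho)-\mathcal{E}_T(0) = -2(f-1)\rho d>0$ for $\rho<0$.

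There is no real obstacle here. The only point needing care is the independence of $\boldsymbol{\epsilon}_u$ from the past, together with the zero-mean property that lets second moments be identified with covariances. I would state both explicitly, and also note the boundary case $\rho=\pm1$, where the factor $\sqrt{1-\rho^2}$ vanishes but the formula still holds.
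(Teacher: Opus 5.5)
Your proposal is correct and follows the paper's own proof essentially step for step. Like the paper, you expand the squared difference, use $\mathrm{Cov}(\mathbf{z}_u)=\mathbf{I}$ to get $d$ for each norm term, use the independence of $\boldsymbol{\epsilon}_u$ from $\mathbf{z}_{u-1}$ to get the cross term $\rho d$, then sum and differentiate. Your explicit comparison $\mathcal{E}_T(\rho)-\mathcal{E}_T(0)=-2(f-1)\rho d>0$ for $\rho<0$ and your remarks on the zero mean and the boundary cases $\rho=\pm1$ are minor additions that the paper leaves implicit.
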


This spatial-temporal enhancement is rooted in the frequency domain. For the AR(1) process defined in Eq.~\eqref{eq:ans_recursion}, the Power Spectral Density (PSD) is given by:
\begin{equation}
S_\rho(\omega) = \frac{1-\rho^2}{1 + \rho^2 - 2\rho \cos(\omega)}, \quad \omega \in [0, \pi].
\label{eq:psd}
\end{equation}
As $\rho \to -1$, the power spectrum $S_\rho(\omega)$ develops a high-pass characteristic, concentrating temporal power at the $\omega = \pi$, as visualized in Fig.~\ref{fig:spectral_analysis} (left). Considering the temporal difference operator as a high-pass filter $H(\omega) = 1 - e^{-j\omega}$, the expected adjacent-difference energy can be expressed as an integral of the filtered spectrum: $\mathcal{E}_T \propto \int |H(\omega)|^2 S_\rho(\omega) d\omega$. Since $|H(\omega)|^2 = 2(1-\cos\omega)$ also reaches its maximum at $\omega = \pi$, the antiphase regime ($\rho < 0$) creates a constructive alignment between the noise power and the filter's passband (detailed spectral analysis are provided in Appendix~\ref{subsec:proof_psd}). 

As illustrated in Fig.~\ref{fig:spectral_analysis}, ANS reallocates the noise power budget from static temporal offsets to dynamic fluctuations. In diffusion models, where early denoising steps determine the global trajectory, this high-frequency seeding provides a robust temporal gradient. ANS injects high-frequency signals into the initial noise to avoid the static results often seen in low-pass initialization. By providing these early variations, our method encourages the model to generate videos with richer temporal variance and content diversity.

\subsection{Temporal Attention Sink}
While frequency-aware RoPE effectively preserves relative temporal resolution, AR video models face a fundamental challenge shared wit LLMs: the degradation of global context over extended horizons. Inspired by \textit{attention sink} phenomenon in LLMs \cite{xiao2023efficient,qiu2025gated}, we introduce an inference-only sink mechanism to anchor long-term synthesis. Specifically, we fix the first $N$ frames within local attention window to serve as persistent global semantic and structural anchors. Unlike training-based approaches like LongLive \cite{yang2025longlive} which incorporate attention sinks into training, our approach is a strictly inference-only, plug-and-play strategy compatible with pretrained AR models without architectural modifications or finetuning. As detailed in Appendix~\ref{sec:attn_sink}, it effectively maintains global consistency over extended horizons.

\begin{table*}[t!]
\centering
\caption{Comparisons with representative open-source autoregressive video generation models of similar parameter sizes. We evaluate both 30s and 60s generation settings using VBench-Long metrics. $^\dagger$ Numbers are adopted from corresponding papers.}
\label{tab:main_comparison_moviegen}
\setlength{\tabcolsep}{2pt}

\resizebox{\textwidth}{!}{%
\begin{tabular}{lc|ccccccc|c|c}
\toprule
Model 
& \makecell{Throughput\\(FPS)$\uparrow$}
& \makecell{Subject\\Consistency}
& \makecell{Background\\Consistency}
& \makecell{Motion\\Smoothness}
& \makecell{Aesthetic\\Quality}
& \makecell{Imaging\\Quality}
& \makecell{Dynamic\\Degree}
& \makecell{Temporal\\Flickering}
& \makecell{\textbf{Quality}\\ \textbf{Score}}
& \makecell{$\Delta$ Drift$\downarrow$} \\
\midrule

\rowcolor{gray!12}
\multicolumn{11}{c}{30s (6$\times$ Extension)} \\
\midrule
CausVid~\cite{yin2025slow}
& 17.0 $^\dagger$ 
& 98.28 & 97.03 & 98.27 & 60.03 & 65.57 & 32.65 
& 98.49 & 81.04 & -3.34 \\
Self Forcing~\cite{huang2025self}
& 17.0 
& 98.07 & 96.78 & 98.45 & 60.13 & 69.57 & 26.24
& 98.58 & 81.21 & -1.78 \\
LongLive~\cite{yang2025longlive}
& 20.7 $^\dagger$ 
& 97.57 & 96.52 & 98.85 & 62.34 & 69.26 & 39.16
& 99.08 & 82.78 & -1.32 \\
Rolling Forcing~\cite{liu2025rolling}
& 17.5 $^\dagger$ 
& 97.98 & 96.83 & 98.80 & 61.20 & 70.98 & 32.80
& 98.65 & 82.31 & -0.25  \\
\midrule
Self Forcing + FLEX 
& 17.0 
& 97.74 & 96.83 & 98.42 & \textbf{63.09} & 69.87 & \textbf{40.84}
& \textbf{99.13} & \textbf{83.01} & \textbf{-0.06} \\

\midrule
\rowcolor{gray!12}
\multicolumn{11}{c}{60s (12$\times$ Extension)} \\
\midrule
CausVid~\cite{yin2025slow}
& 17.0 $^\dagger$ 
& 98.14 & 97.02 & 98.25 & 60.02 & 65.52 & 31.27 
& 98.59 & 80.92 & -2.07 \\
Self Forcing~\cite{huang2025self}
& 17.0 
& 97.74 & 96.64 & 98.34 & 57.67 & 68.82 & 25.57
& 98.55 & 80.51 & -2.51 \\
LongLive~\cite{yang2025longlive}
& 20.7 $^\dagger$ 
& 97.30 & 96.37 & 98.83 & 62.18 & 69.13 & 39.98
& \textbf{99.01} & \textbf{82.68} & -0.71 \\
Rolling Forcing~\cite{liu2025rolling}
& 17.5 $^\dagger$ 
& 97.86 & 96.73 & 98.79 & 60.43 & 70.89 & 32.48
& 98.62 & 82.10 & -0.54 \\
\midrule
Self Forcing + FLEX 
& 17.0 
& 97.85 & \textbf{96.87} & 98.41 & \textbf{62.71} & 69.79 & 37.82
& \textbf{99.01} & \textbf{82.68} & -0.56 \\

\bottomrule
\end{tabular}
}

\vspace{-0.1em}
\end{table*}

\begin{table*}[!t]
\centering
\caption{VBench-Long evaluation results on 30s video generation. We compare our model with key baselines and report representative dimensions results below, with full results across all 16 metrics provided in Appendix~\ref{app:vbench_long}.}
\label{tab:main_comparison_vbench}
\setlength{\tabcolsep}{2pt}

\resizebox{\textwidth}{!}{%
\begin{tabular}{lccccccc|c|c|c}
\toprule
Model 
& \makecell{Subject\\Consistency}
& \makecell{Background\\Consistency}
& \makecell{Aesthetic\\Quality}
& \makecell{Dynamic\\Degree}
& \makecell{Object\\Class}
& \makecell{Spatial\\Relationship}
& \makecell{Overall\\Consistency}
& \makecell{\textbf{Quality}\\ \textbf{Score}}
& \makecell{\textbf{Semantic}\\ \textbf{Score}}
& \makecell{\textbf{Total}\\ \textbf{Score}} \\ 

\midrule
Self Forcing~\cite{huang2025self}
& 98.58 & 97.06 & 61.80 & 29.38 & 90.76 & 72.74
& 25.23 & 81.94 & 76.42 & 80.84 \\ 
LongLive~\cite{yang2025longlive}
& 98.36 & 97.22 & 64.13 & 39.84 & 94.44 & 79.24
& 26.43 & 83.39 & 80.85 & 82.88 \\
Rolling Forcing~\cite{liu2025rolling}
& 98.25 & 97.30 & 63.92 & 40.52 & 94.69 & 76.34
& 26.49 & 83.57 & 79.86 & 82.83 \\
\midrule
Self Forcing + FLEX
& \textbf{98.70} & \textbf{97.94} & \textbf{65.43} & \textbf{44.32} & \textbf{94.94} & \textbf{82.04}
& \textbf{26.67} & \textbf{84.17} & 80.73 & \textbf{83.48} \\
\bottomrule
\end{tabular}
}
\vspace{-0.2em}
\end{table*}

\section{Experiments}

\paragraph{Implementation.} 
For base model, we adopt Self Forcing~\cite{huang2025self} with chunk size of 3. The model is built upon Wan2.1-T2V-1.3B~\cite{wan2025wan}, which supports video generation for durations of 5 seconds. Following CausVid~\cite{yin2025slow}, the base model is initialized with 16k ODE solution pairs sampled from a teacher model. In the subsequent DMD training stage, we perform self-rollout on 5-second video clips (corresponding to $L_{train}=21$ latent frames) using a sliding window of size 9. During inference, we adopt $\alpha=0.1$, $\beta=2.5$, and $\rho=-1$ for default. We compare our method against representative autoregressive models for real-time long video generation, including CausVid~\cite{yin2025slow}, Self Forcing~\cite{huang2025self}, LongLive~\cite{yang2025longlive}, and Rolling Forcing~\cite{liu2025rolling}. All baseline methods are few-step distilled models with 1.3B parameters, providing a fair comparison in terms of model capacity and inference efficiency.

\subsection{Quantitative Comparison}
\paragraph{Evaluation on MovieGen Prompts.} We evaluate visual quality on the MovieGen prompt set (1003 prompts) refined by Qwen2.5-7B-Instruct and report VBench-Long metrics for 30s and 60s durations. As summarized in Table~\ref{tab:main_comparison_moviegen}, our method achieves a state-of-the-art Quality Score of \textbf{83.01} at 30s, with a minimal imaging quality drift of \textbf{−0.06}. Notably, FLEX outperforms training-based counterparts, suggesting that principled frequency and noise modulations can be more effective than long-horizon fine-tuning. For 60-second generation (a 12$\times$ extension over the Self Forcing training length), our method significantly improves the Quality Score from 80.51 to \textbf{82.68} and significantly suppresses quality drift from -2.51 to \textbf{-0.56} compared to original Self Forcing. This indicates an effective mitigation of error accumulation over time. Notably, our framework yields performance competitive with LongLive, which requires self-rollout video samples up to 60 seconds, highlighting the effectiveness of our training-free approach for long video generation.

\paragraph{Evaluation on VBench Prompts.} We further validate our approach using 946 official refined VBench prompts~\cite{huang2024vbench} across 5 random seeds. Results on the 30s VBench-Long benchmark (Table~\ref{tab:main_comparison_vbench}) show that FLEX achieves a total score of \textbf{83.48}, surpassing existing autoregressive baselines such as LongLive~(82.88) and Rolling Forcing~(82.83). Specifically, our method yields substantial gains in dynamic degree, rising from 29.38 (Self Forcing) to \textbf{44.32}, while simultaneously preserving the highest aesthetic score \textbf{65.43}. The semantic alignment remains competitive with LongLive, confirming that our enhancements in motion and coherence do not compromise semantic alignment. These results indicate that FLEX effectively enhances long-video generation by balancing temporal coherence with motion dynamics and semantic fidelity. Full quantitative results across all 16 dimensions are provided in Appendix~\ref{app:vbench_long}.

\begin{figure*}[t!]
    \centering
    \includegraphics[width=\textwidth]{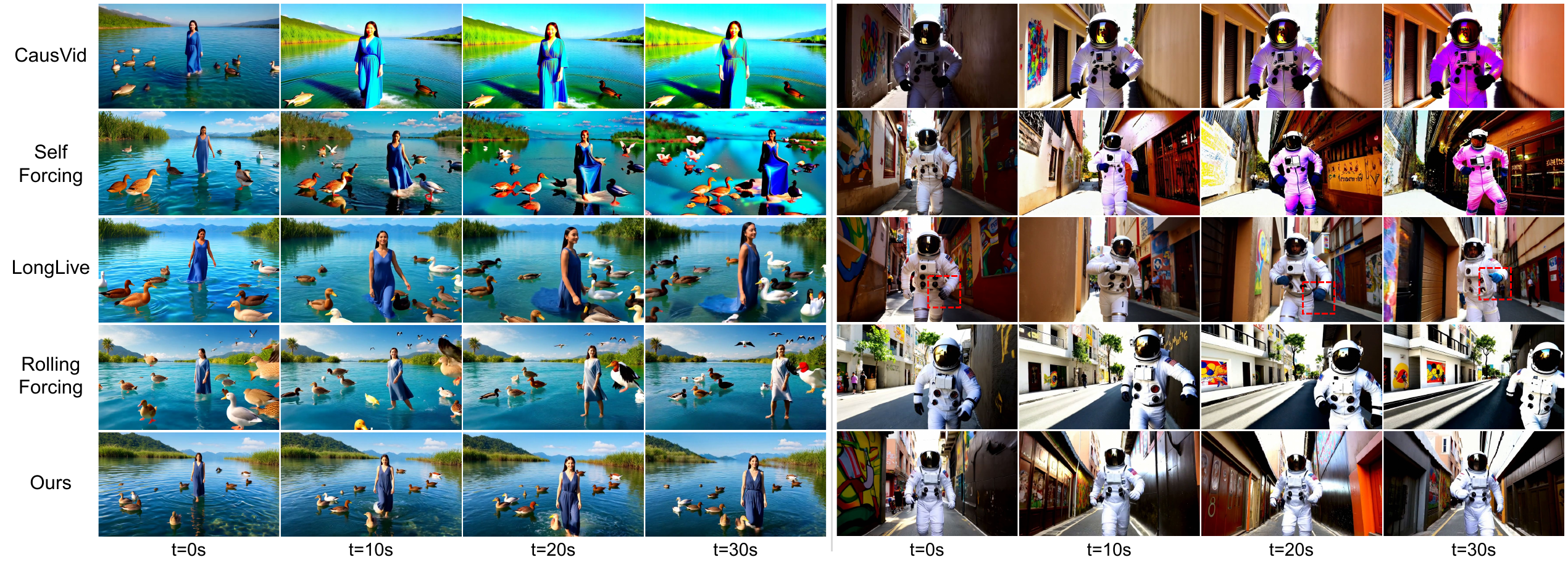}
    \caption{
    \textbf{Qualitative comparison of video generation over 30s seconds.}
   Our method yields higher-fidelity video generation with reduced artifacts and improved temporal consistency, maintaining consistent appearance attributes over time.
    }
    \label{fig:qualitative_30s}
    \vspace{-0.8em}
\end{figure*}

\begin{figure*}[t!]
    \centering
    \includegraphics[width=\textwidth]{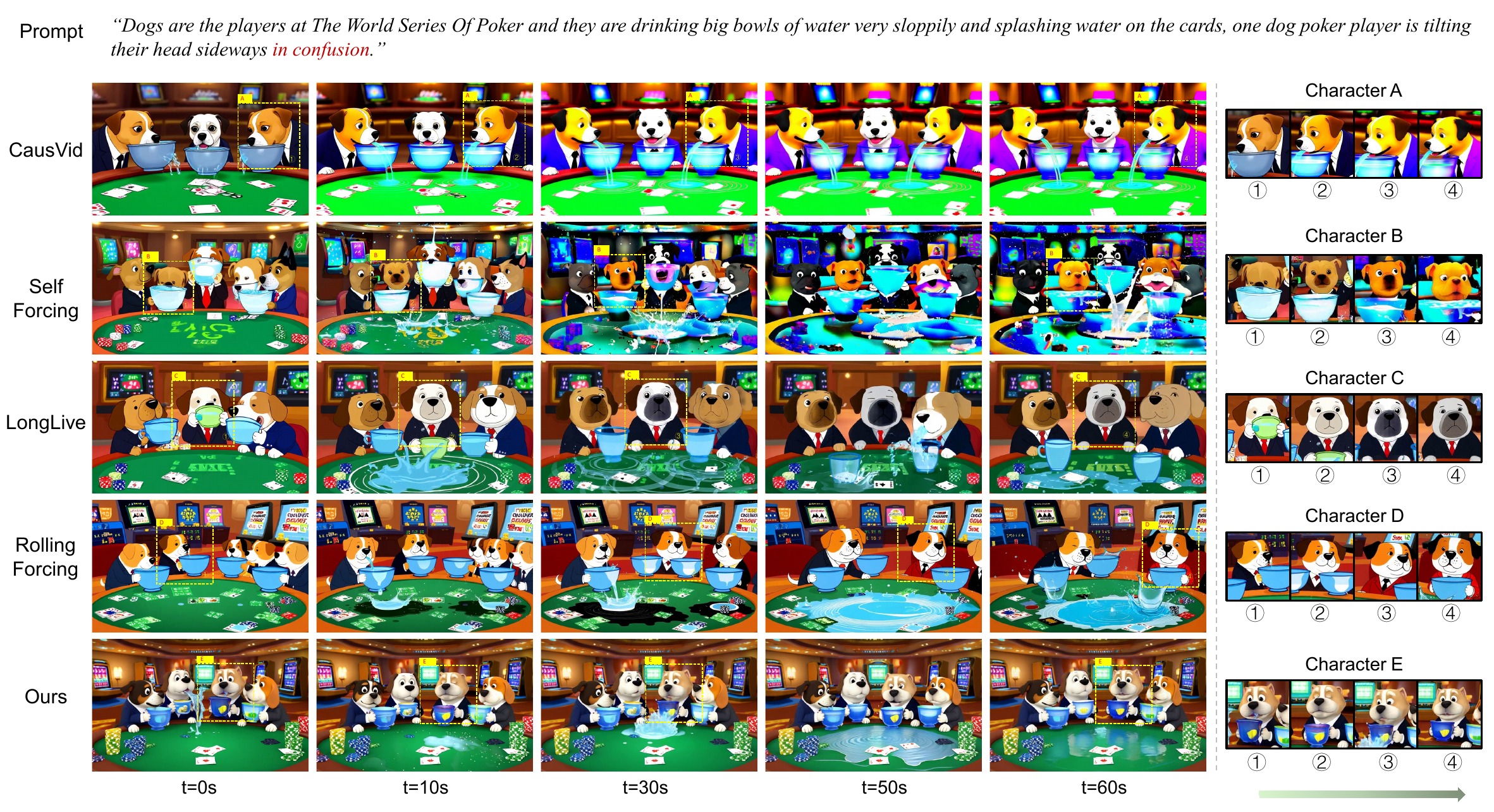}
    \caption{
    \textbf{Qualitative comparison on 60 second video generation.}
    We compare our method against four baselines. While prior methods suffer from identity drift or scene collapse over long durations, our model maintains high subject consistency and visual quality. The right panels (Character A-E) highlight our model's ability to preserve fine-grained character identities throughout the entire 60s sequence.
    }
    \label{fig:qualitative_60s}
    \vspace{-0.5em}
\end{figure*}

\subsection{Qualitative Comparison}
\label{sec:Qualitative Results}
We conduct qualitative evaluations for further comparison. 
Figure~\ref{fig:qualitative_30s} shows generated videos on 30 second duration. While previous autoregressive models like Self Forcing often suffer from rapid quality degradation, FLEX effectively maintains temporal consistency and fine-grained visual attributes. For instance, high frequency details such as the specific texture and color of the astronaut's gloves remain stable over time.
We further evaluate a 60 second generation under a complex multi-subject prompt in Figure~\ref{fig:qualitative_60s}. In this extreme $12\times$ extrapolation setting, CausVid and Self Forcing exhibit significant visual collapse, while LongLive and Rolling Forcing suffer from identity drift, where character features gradually fade over time. In contrast, FLEX maintains high visual fidelity and subject consistency. It also demonstrates superior semantic alignment, accurately capturing facial expressions such as \textit{confusion}) as described. The temporal character crops (Character A-E) further validate our method’s ability to preserve precise identity throughout the entire 60s sequence, effectively mitigating the common challenge of identity fading in long video generation. More visualization results are provided in Appendix~\ref{appendix:extended_qualitative}.

\subsection{Ablation Study}
To further validate our proposed method, we conduct extensive ablation experiments on a subset of 128 prompts from MovieGen\cite{polyak2024movie}, following CausVid\cite{yin2025slow}. For the following quantitative analyses, we use abbreviations for the VBench-Long metrics where, for instance, \textit{Subj.} denotes Subject Consistency.

\paragraph{Component Ablation.} We conduct ablation experiments to isolate the contributions of our three core modules: (i) NTK-by-part RoPE (NR), (ii) Antiphase Noise Sampling (ANS), and (iii) Attention Sink (AS). As summarized in Table~\ref{tab:module_ablation}, the full configuration (NR+ANS+AS) achieves the optimal Quality Score of \textbf{83.07}. (i) Impact of NR: Removing the frequency-aware RoPE modulation leads to a degradation in Subject and Background consistency. This suggests that without NR, the model fails to resolve relative temporal positions over extended horizons, causing significant temporal drift. (ii) Impact of ANS: The absence of antiphase correlation results in a significant drop in the Dynamic Degree from 40.63 to 33.84. This validates our theoretical analysis that reallocating the noise power spectrum toward high frequencies prevents motion collapse. (iii) Impact of AS: Excluding the attention sink mechanism causes a noticeable decline in overall quality scores. This confirms that initial frames serve as indispensable stationary anchors, mitigating cumulative visual drift in autoregressive models.

\begin{table}[t!]
\centering
\caption{Ablation study of individual components within our framework. NR, ANS, and AS denote NTK-by-part RoPE, Antiphase Noise Sampling, and Attention Sink, respectively. }
\label{tab:module_ablation}
\setlength{\tabcolsep}{2pt} 
\small
\resizebox{\columnwidth}{!}{
\begin{tabular}{ccc|cccccc|c}
\toprule
NR & ANS & AS & Subj. & Back. & Aest. &Image. & Dyn. & Flick. & \textbf{Quality} \\
\midrule
$\times$ & \checkmark & \checkmark & 96.37 & 95.62 & 60.46 & 67.66 & 59.67 & 98.59 & 82.52\\
\checkmark & $\times$ & \checkmark & 98.31 & 97.32 & 62.46 & 70.08 & 33.84 & 99.13 & 82.78 \\
\checkmark & \checkmark & $\times$ & 98.10 & 97.37 & 59.03 & 67.96 & 32.42 & 98.61 & 81.14\\
\midrule
\checkmark & \checkmark & \checkmark & 98.21 & 97.23 & 62.58 & 69.77 & 40.63 & 99.12 & \textbf{83.07} \\
\bottomrule
\end{tabular}
}
\vspace{-1em}
\end{table}

\begin{table}[!t]
\centering
\caption{Ablation study of RoPE interpolation hyper-parameters $\alpha$ and $\beta$ on VBench-Long metrics. We assess the impact of varying one parameter while keeping the other fixed.}
\label{tab:ablation_rope}
\small
\setlength{\tabcolsep}{2.5pt}
\resizebox{\columnwidth}{!}{%
\begin{tabular}{cc|cccccc|c}
\toprule
$\alpha$ & $\beta$ & Subj. & Back. & Motion & Aest. & Imag. & Dyn. & \textbf{Quality} \\
\midrule
\multicolumn{9}{c}{Varying $\beta$ (with fixed $\alpha = 0.1$)} \\
\midrule
0.1 & 0.5 & 97.21 & 96.35 & 97.60 & 60.96 & 68.31 & 62.40 & 83.31 \\
0.1 & 1.0 & 97.36 & 96.47 & 97.81 & 61.35 & 68.56 & 58.40 & 83.32 \\
0.1 & 1.5 & 97.61 & 96.67 & 98.02 & 61.53 & 68.78 & 52.69 & 83.15 \\
0.1 & 2.0 & 97.94 & 96.96 & 98.22 & 62.17 & 69.00 & 47.36 & 83.14 \\
0.1 & 3.0 & 98.44 & 97.40 & 98.62 & 62.68 & 69.84 & 34.86 & 82.85 \\
\midrule
\multicolumn{9}{c}{Varying $\alpha$ (with fixed $\beta = 2.5 $)} \\
\midrule
0.001 & 2.5 & 98.16 & 97.16 & 98.41 & 62.22 & 69.77 & 40.38 & 82.93 \\
0.005 & 2.5 & 98.16 & 97.15 & 98.41 & 62.41 & 69.73 & 39.40 & 82.90 \\
0.01  & 2.5 & 98.15 & 97.15 & 98.36 & 62.21 & 69.61 & 40.87 & 82.92 \\
0.1   & 2.5 & 98.21 & 97.23 & 98.44 & 62.58 & 69.77 & 40.63 & 83.07 \\
0.5   & 2.5 & 98.49 & 97.45 & 98.65 & 62.61 & 69.90 & 34.13 & 82.79 \\
\bottomrule
\end{tabular}
}
\vspace{-0.6em}
\end{table}

\paragraph{Impact of RoPE Hyper-parameters $\alpha$ and $\beta$.}
 We further examine the impact of interpolation parameters $\alpha$ and $\beta$. The empirical results illustrated in Table~\ref{tab:ablation_rope} reveal a distinct trade-off: increasing $\alpha$ or $\beta$ consistently improves subject and background consistency scores but results in a notable decline in dynamic performance. This suggests that excessive interpolation biases the model toward more static generations. These findings underscore that intermediate settings (e.g., $\alpha=0.1, \beta=2.5$) are essential to balance long-term consistency with content dynamic.

\paragraph{Impact of Chunk Covariance Coefficient $\rho$.}
We evaluate the influence of the chunk covariance coefficient $\rho$ within ANS, which regulates the temporal correlation of the initial noise within chunks. As illustrated in Figure~\ref{fig:ablation_noise}, $\rho$ acts as a pivotal factor in balancing temporal consistency against motion dynamics. Specifically, as $\rho$ increases from -1.0 to 1.0, the enhanced inter-chunk correlation leads to steady gains in subject/background consistency and aesthetic quality. However, this stability is achieved at the cost of motion richness, as the model becomes increasingly constrained, limiting its ability to generate diverse temporal variations. 

\begin{figure}[!t]
    \centering
    \includegraphics[width=\columnwidth]{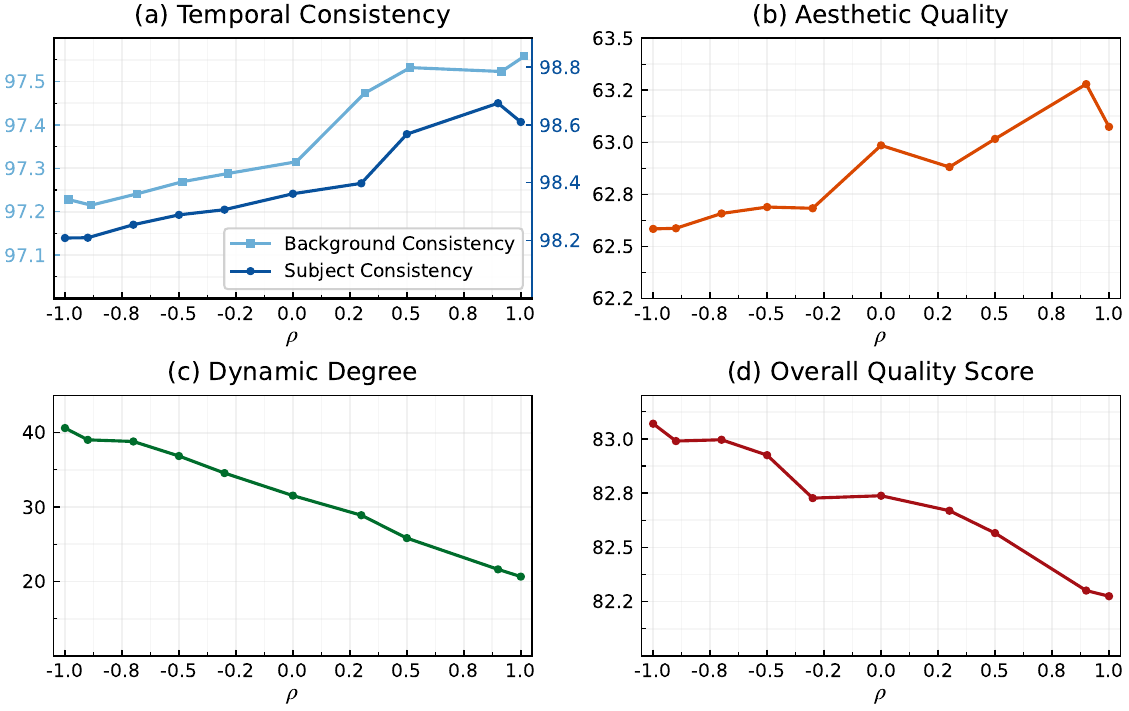}
    \caption{Impact of covariance coefficient $\rho$ on VBench metrics.}
    \label{fig:ablation_noise}
    \vspace{-0.6em}
\end{figure}

\subsection{Generalization to Minute-Level Generation}
To evaluate the model-agnostic robustness of our framework, we integrate the proposed modules into LongLive for ultra-long horizon synthesis (implementation details are provided in Appendix~\ref{sec:appendix_longlive}). As summarized in Table~\ref{tab:longlive_comparison}, FLEX yields consistent improvements across multiple VBench-Long dimensions, enhancing quality score to \textbf{82.48} and significantly reducing image drift from $-2.19$ to $-1.17$. 

A primary challenge in minute-level generation lies in the balance between temporal consistency and dynamics~\cite{yang2026stableworldstableconsistentlong}. As the generation horizon extends, In extended horizons, baseline models frequently exhibit identity drift or converge into frozen frames or periodic motion cycles. As illustrated in Figure~\ref{fig:longlive_240s}, FLEX helps to preserve consistent identity and fine-grained textures even reaching to 240s, demonstrate that FLEX functions as a robust architectural stabilizer that significantly enhances the inference horizon scalability of existing autoregressive video model.

\begin{table}[!t]
\centering
\caption{Comparison of 4 minutes generation based on LongLive.}
\label{tab:longlive_comparison}
\setlength{\tabcolsep}{2pt} 
\small
\resizebox{\columnwidth}{!}{
\begin{tabular}{l|ccccc|c|c}
\toprule
Method & Sub. & Back. & Aest. & Dyn. & Flick. & \textbf{Quality} & $\Delta$ Drift$\downarrow$ \\
\midrule
LongLive & 97.42 & 96.50 & 61.17 & 39.03  & 99.02 & 82.40 & -2.19 \\

LongLive + Ours & \textbf{97.57} & \textbf{96.64} & 61.12 & \textbf{40.90}  & \textbf{99.09} & \textbf{82.48} & \textbf{-1.17} \\
\bottomrule
\end{tabular}
}
\vspace{-0.4em}
\end{table}

\begin{figure}[t!]
    \centering
    \includegraphics[width=0.5\textwidth]{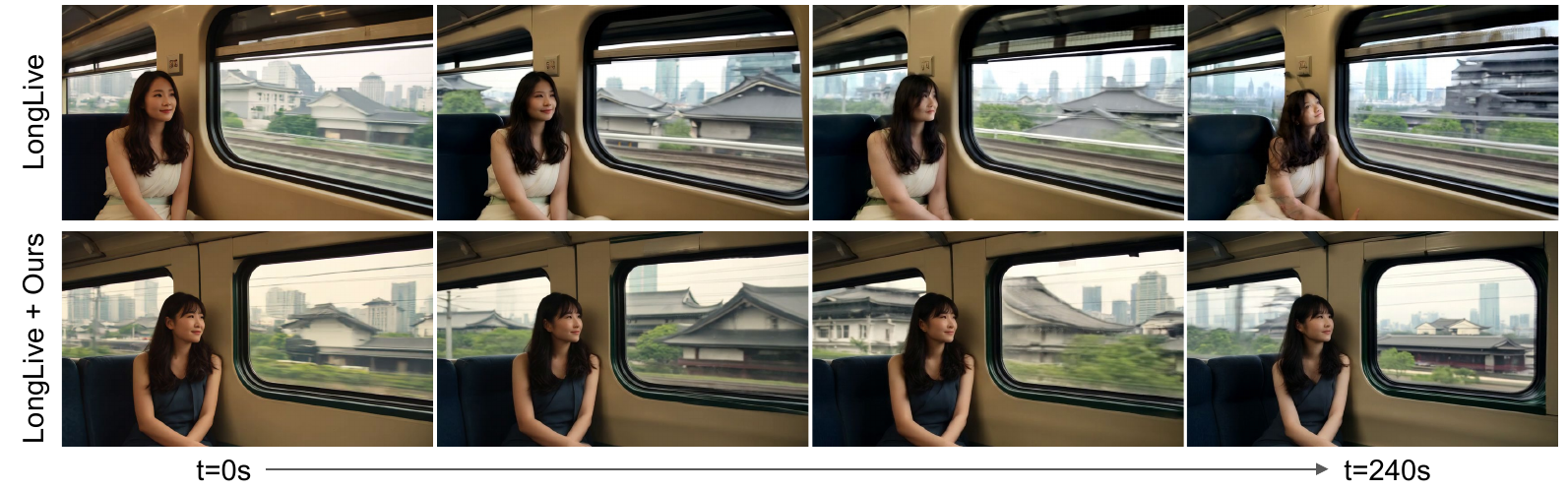}
    \caption{240s generation results. FLEX exhibits higher identity consistency and background dynamics than original LongLive.}
    \label{fig:longlive_240s}
    \vspace{-0.8em}
\end{figure}


\section{Conclusion}
This paper presents FLEX, a versatile and training-free framework that effectively extends the temporal horizon of autoregressive video models beyond their training limits. By addressing the spectral bias in 3D RoPE and the deficiency of dynamic priors in noise sampling, FLEX bridges the gap between short-term training and long-term inference. Specifically, our frequency-aware modulation stabilizes global structure while preserving temporal detail, complemented by Antiphase Noise Sampling and Attention Sinks to ensure temporal dynamics and global structural consistency. Extensive evaluations on VBench-Long demonstrate that FLEX achieves a state-of-the-art total score of \textbf{83.48} at $6\times$ extrapolation, surpassing even training-based baselines. As a plug-and-play framework, FLEX seamlessly integrates with existing autoregressive inference pipelines, enhancing both consistency and dynamics in minute-level generation to further push the boundaries of inference horizons.

\bibliography{example_paper}

@article{huang2025self,
  title={Self Forcing: Bridging the Train-Test Gap in Autoregressive Video Diffusion},
  author={Huang, Xun and Li, Zhengqi and He, Guande and Zhou, Mingyuan and Shechtman, Eli},
  journal={arXiv preprint arXiv:2506.08009},
  year={2025}
}

@article{wan2025wan,
  title={Wan: Open and advanced large-scale video generative models},
  author={Wan, Team and Wang, Ang and Ai, Baole and Wen, Bin and Mao, Chaojie and Xie, Chen-Wei and Chen, Di and Yu, Feiwu and Zhao, Haiming and Yang, Jianxiao and others},
  journal={arXiv preprint arXiv:2503.20314},
  year={2025}
}

@inproceedings{yin2025slow,
  title={From slow bidirectional to fast autoregressive video diffusion models},
  author={Yin, Tianwei and Zhang, Qiang and Zhang, Richard and Freeman, William T and Durand, Fredo and Shechtman, Eli and Huang, Xun},
  booktitle={Proceedings of the Computer Vision and Pattern Recognition Conference},
  pages={22963--22974},
  year={2025}
}

@article{yang2025longlive,
  title={Longlive: Real-time interactive long video generation},
  author={Yang, Shuai and Huang, Wei and Chu, Ruihang and Xiao, Yicheng and Zhao, Yuyang and Wang, Xianbang and Li, Muyang and Xie, Enze and Chen, Yingcong and Lu, Yao and others},
  journal={arXiv preprint arXiv:2509.22622},
  year={2025}
}

@article{liu2025rolling,
  title={Rolling forcing: Autoregressive long video diffusion in real time},
  author={Liu, Kunhao and Hu, Wenbo and Xu, Jiale and Shan, Ying and Lu, Shijian},
  journal={arXiv preprint arXiv:2509.25161},
  year={2025}
}

@article{polyak2024movie,
  title={Movie gen: A cast of media foundation models},
  author={Polyak, Adam and Zohar, Amit and Brown, Andrew and Tjandra, Andros and Sinha, Animesh and Lee, Ann and Vyas, Apoorv and Shi, Bowen and Ma, Chih-Yao and Chuang, Ching-Yao and others},
  journal={arXiv preprint arXiv:2410.13720},
  year={2024}
}

@article{yang2025qwen3,
  title={Qwen3 technical report},
  author={Yang, An and Li, Anfeng and Yang, Baosong and Zhang, Beichen and Hui, Binyuan and Zheng, Bo and Yu, Bowen and Gao, Chang and Huang, Chengen and Lv, Chenxu and others},
  journal={arXiv preprint arXiv:2505.09388},
  year={2025}
}

@inproceedings{huang2024vbench,
  title={Vbench: Comprehensive benchmark suite for video generative models},
  author={Huang, Ziqi and He, Yinan and Yu, Jiashuo and Zhang, Fan and Si, Chenyang and Jiang, Yuming and Zhang, Yuanhan and Wu, Tianxing and Jin, Qingyang and Chanpaisit, Nattapol and others},
  booktitle={Proceedings of the IEEE/CVF Conference on Computer Vision and Pattern Recognition},
  pages={21807--21818},
  year={2024}
}

@article{peng2023yarn,
  title={Yarn: Efficient context window extension of large language models},
  author={Peng, Bowen and Quesnelle, Jeffrey and Fan, Honglu and Shippole, Enrico},
  journal={arXiv preprint arXiv:2309.00071},
  year={2023}
}

@article{chen2023extending,
  title={Extending context window of large language models via positional interpolation},
  author={Chen, Shouyuan and Wong, Sherman and Chen, Liangjian and Tian, Yuandong},
  journal={arXiv preprint arXiv:2306.15595},
  year={2023}
}

@article{jacot2018neural,
  title={Neural tangent kernel: Convergence and generalization in neural networks},
  author={Jacot, Arthur and Gabriel, Franck and Hongler, Cl{\'e}ment},
  journal={Advances in neural information processing systems},
  volume={31},
  year={2018}
}

@inproceedings{yin2024one,
  title={One-step diffusion with distribution matching distillation},
  author={Yin, Tianwei and Gharbi, Micha{\"e}l and Zhang, Richard and Shechtman, Eli and Durand, Fredo and Freeman, William T and Park, Taesung},
  booktitle={Proceedings of the IEEE/CVF conference on computer vision and pattern recognition},
  pages={6613--6623},
  year={2024}
}

@article{su2024roformer,
  title={Roformer: Enhanced transformer with rotary position embedding},
  author={Su, Jianlin and Ahmed, Murtadha and Lu, Yu and Pan, Shengfeng and Bo, Wen and Liu, Yunfeng},
  journal={Neurocomputing},
  volume={568},
  pages={127063},
  year={2024},
  publisher={Elsevier}
}

@misc{bloc97ntk,
  author       = {bloc97},
  title        = {NTK-Aware Scaled RoPE allows context size extension without fine-tuning},
  howpublished = {Reddit and GitHub Pull Request},
  year         = {2023},
  url          = {https://www.reddit.com/r/LocalLLaMA/comments/14lz7j5/ntkaware_scaled_rope_allows_context_size/},
  note         = {Accessed: 2026-01-26}
}

@misc{bloc97parts,
  author       = {bloc97},
  title        = {Add NTK-by-parts interpolation},
  howpublished = {GitHub Pull Request},
  year         = {2023},
  url          = {https://github.com/jquesnelle/yarn/pull/1},
  note         = {Accessed: 2026-01-26}
}

@article{qiu2023freenoise,
  title={Freenoise: Tuning-free longer video diffusion via noise rescheduling},
  author={Qiu, Haonan and Xia, Menghan and Zhang, Yong and He, Yingqing and Wang, Xintao and Shan, Ying and Liu, Ziwei},
  journal={arXiv preprint arXiv:2310.15169},
  year={2023}
}

@article{guo2023animatediff,
  title={Animatediff: Animate your personalized text-to-image diffusion models without specific tuning},
  author={Guo, Yuwei and Yang, Ceyuan and Rao, Anyi and Liang, Zhengyang and Wang, Yaohui and Qiao, Yu and Agrawala, Maneesh and Lin, Dahua and Dai, Bo},
  journal={arXiv preprint arXiv:2307.04725},
  year={2023}
}

@article{xiao2023efficient,
  title={Efficient streaming language models with attention sinks},
  author={Xiao, Guangxuan and Tian, Yuandong and Chen, Beidi and Han, Song and Lewis, Mike},
  journal={arXiv preprint arXiv:2309.17453},
  year={2023}
}

@article{chen2023control,
  title={Control-a-video: Controllable text-to-video generation with diffusion models},
  author={Chen, Weifeng and Ji, Yatai and Wu, Jie and Wu, Hefeng and Xie, Pan and Li, Jiashi and Xia, Xin and Xiao, Xuefeng and Lin, Liang},
  journal={arXiv e-prints},
  pages={arXiv--2305},
  year={2023}
}

@misc{chen2025skyreelsv2infinitelengthfilmgenerative,
      title={SkyReels-V2: Infinite-length Film Generative Model}, 
      author={Guibin Chen and Dixuan Lin and Jiangping Yang and Chunze Lin and Junchen Zhu and Mingyuan Fan and Hao Zhang and Sheng Chen and Zheng Chen and Chengcheng Ma and Weiming Xiong and Wei Wang and Nuo Pang and Kang Kang and Zhiheng Xu and Yuzhe Jin and Yupeng Liang and Yubing Song and Peng Zhao and Boyuan Xu and Di Qiu and Debang Li and Zhengcong Fei and Yang Li and Yahui Zhou},
      year={2025},
      eprint={2504.13074},
      archivePrefix={arXiv},
      primaryClass={cs.CV},
      url={https://arxiv.org/abs/2504.13074}, 
}

@misc{ai2025magi1autoregressivevideogeneration,
      title={MAGI-1: Autoregressive Video Generation at Scale}, 
      author={Sand. ai and Hansi Teng and Hongyu Jia and Lei Sun and Lingzhi Li and Maolin Li and Mingqiu Tang and Shuai Han and Tianning Zhang and W. Q. Zhang and Weifeng Luo and Xiaoyang Kang and Yuchen Sun and Yue Cao and Yunpeng Huang and Yutong Lin and Yuxin Fang and Zewei Tao and Zheng Zhang and Zhongshu Wang and Zixun Liu and Dai Shi and Guoli Su and Hanwen Sun and Hong Pan and Jie Wang and Jiexin Sheng and Min Cui and Min Hu and Ming Yan and Shucheng Yin and Siran Zhang and Tingting Liu and Xianping Yin and Xiaoyu Yang and Xin Song and Xuan Hu and Yankai Zhang and Yuqiao Li},
      year={2025},
      eprint={2505.13211},
      archivePrefix={arXiv},
      primaryClass={cs.CV},
      url={https://arxiv.org/abs/2505.13211}, 
}

@misc{ho2022videodiffusionmodels,
      title={Video Diffusion Models}, 
      author={Jonathan Ho and Tim Salimans and Alexey Gritsenko and William Chan and Mohammad Norouzi and David J. Fleet},
      year={2022},
      eprint={2204.03458},
      archivePrefix={arXiv},
      primaryClass={cs.CV},
      url={https://arxiv.org/abs/2204.03458}, 
}

@misc{blattmann2023stablevideodiffusionscaling,
      title={Stable Video Diffusion: Scaling Latent Video Diffusion Models to Large Datasets}, 
      author={Andreas Blattmann and Tim Dockhorn and Sumith Kulal and Daniel Mendelevitch and Maciej Kilian and Dominik Lorenz and Yam Levi and Zion English and Vikram Voleti and Adam Letts and Varun Jampani and Robin Rombach},
      year={2023},
      eprint={2311.15127},
      archivePrefix={arXiv},
      primaryClass={cs.CV},
      url={https://arxiv.org/abs/2311.15127}, 
}

@misc{chen2025sanavideoefficientvideogeneration,
      title={SANA-Video: Efficient Video Generation with Block Linear Diffusion Transformer}, 
      author={Junsong Chen and Yuyang Zhao and Jincheng Yu and Ruihang Chu and Junyu Chen and Shuai Yang and Xianbang Wang and Yicheng Pan and Daquan Zhou and Huan Ling and Haozhe Liu and Hongwei Yi and Hao Zhang and Muyang Li and Yukang Chen and Han Cai and Sanja Fidler and Ping Luo and Song Han and Enze Xie},
      year={2025},
      eprint={2509.24695},
      archivePrefix={arXiv},
      primaryClass={cs.CV},
      url={https://arxiv.org/abs/2509.24695}, 
}

@misc{kodaira2025streamditrealtimestreamingtexttovideo,
      title={StreamDiT: Real-Time Streaming Text-to-Video Generation}, 
      author={Akio Kodaira and Tingbo Hou and Ji Hou and Markos Georgopoulos and Felix Juefei-Xu and Masayoshi Tomizuka and Yue Zhao},
      year={2025},
      eprint={2507.03745},
      archivePrefix={arXiv},
      primaryClass={cs.CV},
      url={https://arxiv.org/abs/2507.03745}, 
}

@misc{villegas2022phenakivariablelengthvideo,
      title={Phenaki: Variable Length Video Generation From Open Domain Textual Description}, 
      author={Ruben Villegas and Mohammad Babaeizadeh and Pieter-Jan Kindermans and Hernan Moraldo and Han Zhang and Mohammad Taghi Saffar and Santiago Castro and Julius Kunze and Dumitru Erhan},
      year={2022},
      eprint={2210.02399},
      archivePrefix={arXiv},
      primaryClass={cs.CV},
      url={https://arxiv.org/abs/2210.02399}, 
}

@misc{press2022trainshorttestlong,
      title={Train Short, Test Long: Attention with Linear Biases Enables Input Length Extrapolation}, 
      author={Ofir Press and Noah A. Smith and Mike Lewis},
      year={2022},
      eprint={2108.12409},
      archivePrefix={arXiv},
      primaryClass={cs.CL},
      url={https://arxiv.org/abs/2108.12409}, 
}

@misc{zhong2024understandingropeextensionslongcontext,
      title={Understanding the RoPE Extensions of Long-Context LLMs: An Attention Perspective}, 
      author={Meizhi Zhong and Chen Zhang and Yikun Lei and Xikai Liu and Yan Gao and Yao Hu and Kehai Chen and Min Zhang},
      year={2024},
      eprint={2406.13282},
      archivePrefix={arXiv},
      primaryClass={cs.CL},
      url={https://arxiv.org/abs/2406.13282}, 
}

@misc{ding2024longropeextendingllmcontext,
      title={LongRoPE: Extending LLM Context Window Beyond 2 Million Tokens}, 
      author={Yiran Ding and Li Lyna Zhang and Chengruidong Zhang and Yuanyuan Xu and Ning Shang and Jiahang Xu and Fan Yang and Mao Yang},
      year={2024},
      eprint={2402.13753},
      archivePrefix={arXiv},
      primaryClass={cs.CL},
      url={https://arxiv.org/abs/2402.13753}, 
}

@misc{yang2026stableworldstableconsistentlong,
      title={StableWorld: Towards Stable and Consistent Long Interactive Video Generation}, 
      author={Ying Yang and Zhengyao Lv and Tianlin Pan and Haofan Wang and Binxin Yang and Hubery Yin and Chen Li and Ziwei Liu and Chenyang Si},
      year={2026},
      eprint={2601.15281},
      archivePrefix={arXiv},
      primaryClass={cs.CV},
      url={https://arxiv.org/abs/2601.15281}, 
}

@misc{sun2025ardiffusionasynchronousvideogeneration,
      title={AR-Diffusion: Asynchronous Video Generation with Auto-Regressive Diffusion}, 
      author={Mingzhen Sun and Weining Wang and Gen Li and Jiawei Liu and Jiahui Sun and Wanquan Feng and Shanshan Lao and SiYu Zhou and Qian He and Jing Liu},
      year={2025},
      eprint={2503.07418},
      archivePrefix={arXiv},
      primaryClass={cs.CV},
      url={https://arxiv.org/abs/2503.07418}, 
}

@article{song2025history,
  title={History-guided video diffusion},
  author={Song, Kiwhan and Chen, Boyuan and Simchowitz, Max and Du, Yilun and Tedrake, Russ and Sitzmann, Vincent},
  journal={arXiv preprint arXiv:2502.06764},
  year={2025}
}

@article{gao2025longvie,
  title={Longvie: Multimodal-guided controllable ultra-long video generation},
  author={Gao, Jianxiong and Chen, Zhaoxi and Liu, Xian and Feng, Jianfeng and Si, Chenyang and Fu, Yanwei and Qiao, Yu and Liu, Ziwei},
  journal={arXiv preprint arXiv:2508.03694},
  year={2025}
}

@article{lu2025freelong++,
  title={FreeLong++: Training-Free Long Video Generation via Multi-band SpectralFusion},
  author={Lu, Yu and Yang, Yi},
  journal={arXiv preprint arXiv:2507.00162},
  year={2025}
}

@article{touvron2023llama,
  title={Llama: Open and efficient foundation language models},
  author={Touvron, Hugo and Lavril, Thibaut and Izacard, Gautier and Martinet, Xavier and Lachaux, Marie-Anne and Lacroix, Timoth{\'e}e and Rozi{\`e}re, Baptiste and Goyal, Naman and Hambro, Eric and Azhar, Faisal and others},
  journal={arXiv preprint arXiv:2302.13971},
  year={2023}
}

@article{qiu2025gated,
  title={Gated Attention for Large Language Models: Non-linearity, Sparsity, and Attention-Sink-Free},
  author={Qiu, Zihan and Wang, Zekun and Zheng, Bo and Huang, Zeyu and Wen, Kaiyue and Yang, Songlin and Men, Rui and Yu, Le and Huang, Fei and Huang, Suozhi and others},
  journal={arXiv preprint arXiv:2505.06708},
  year={2025}
}

@misc{yesiltepe2025infinityropeactioncontrollableinfinitevideo,
      title={Infinity-RoPE: Action-Controllable Infinite Video Generation Emerges From Autoregressive Self-Rollout}, 
      author={Hidir Yesiltepe and Tuna Han Salih Meral and Adil Kaan Akan and Kaan Oktay and Pinar Yanardag},
      year={2025},
      eprint={2511.20649},
      archivePrefix={arXiv},
      primaryClass={cs.CV},
      url={https://arxiv.org/abs/2511.20649}, 
}
\bibliographystyle{icml2026}

\newpage
\appendix
\onecolumn

\section{Analysis of Position Interpolation in Video Extension}
\label{sec:appendix_pi}

In Section~\ref{sec:preliminary}, we noted that standard Position Interpolation (PI), despite its success in LLMs, fails to generalize to long-horizon video generation. We implement a dynamic PI strategy where the scaling factor is $S = \max(1, L_{inf} / L_{train})$. This ensures no scaling within the training horizon, while temporal positions are compressed only during extrapolation. Here, we analyze why this standard approach leads to catastrophic failure.As illustrated in Figure~\ref{fig:pi_failure_analysis}, PI typically suffers from two distinct failure modes:

\begin{figure}[h]
    \centering
    \includegraphics[width=0.9\linewidth]{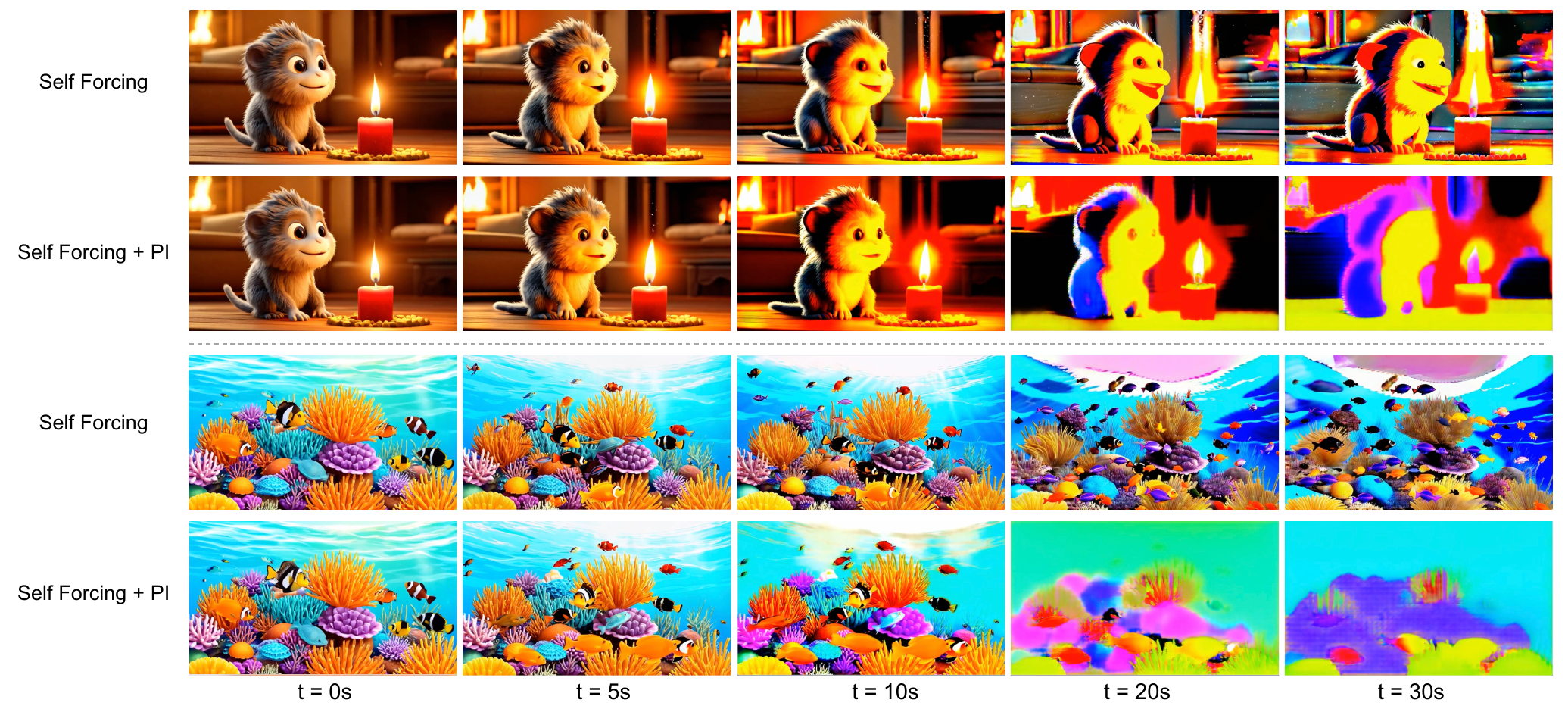} 
    \caption{\textbf{Qualitative failure cases of Position Interpolation (PI) in long-horizon extrapolation.} 
    Compared to vanilla Self Forcing, PI leads to immediate static collapse and severe chromatic artifacts beyond $L_{train}=21$. This failure stems from coordinate compression, which freezes the temporal progression and forces the model into a repetitive state at the 5-second. }
    \label{fig:pi_failure_analysis}
\end{figure}

\begin{itemize}
    \item \textbf{Static Collapse via Temporal Compression:} The core mechanism of PI involves scaling the position indices $n' = n / S$. During long-range inference, this effectively compresses the temporal coordinates of all subsequent frames around the 5-second boundary ($L_{train}=21$) point. This over-compression, especially for high-frequency components, leads to a critical loss of \textbf{temporal discriminability}. The attention mechanism can no longer distinguish between future frames due to their near-identical relative positions, causing motion to vanish abruptly. The video remains "frozen" at the state of the training boundary.
    
    \item \textbf{Chromatic Collapse and Visual Artifacts:} Beyond the loss of motion, PI induces a total breakdown of visual fidelity. By forcing the model to operate on compressed positional coordinates that deviate from the original training resolution, the autoregressive process becomes numerically unstable. This manifests as highly saturated color patterns (as seen in the final frames of Fig.~\ref{fig:pi_failure_analysis}), indicating that the model can no longer preserve coherent structural information from the scaled-down positional input.
\end{itemize}

In contrast, our frequency-aware 3D RoPE maintains the original temporal resolution for high-frequency components by allowing them to extrapolate. This preserves the model's ability to distinguish discrete temporal steps, allowing for stable and dynamic generation even as the context length increases.

\section{Proofs of Proposition \ref{prop:energy}}
\label{app:proofs}

\subsection{Energy Monotonicity}
\label{subsec:proof_energy}

\begin{proposition}
The expected adjacent-difference energy for ANS initialization is given by:
\begin{equation}
    \mathcal{E}_T(\rho) = 2(f-1)(1-\rho)d
\end{equation}
Furthermore, $\mathcal{E}_T(\rho)$ is a strictly decreasing function of $\rho$, reaching its maximum value $4(f-1)d$ at $\rho = -1$.
\end{proposition}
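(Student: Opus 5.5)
The plan is to compute the energy term by term using linearity of expectation, together with the second-moment structure already established in Proposition~\ref{prop:ans_properties}. For each $u\in\{1,\dots,f-1\}$, expand
\begin{equation*}
\mathbb{E}\|\mathbf{z}_u-\mathbf{z}_{u-1}\|_2^2=\mathbb{E}\|\mathbf{z}_u\|_2^2+\mathbb{E}\|\mathbf{z}_{u-1}\|_2^2-2\,\mathbb{E}\langle\mathbf{z}_u,\mathbf{z}_{u-1}\rangle .
\end{equation*}
Marginal preservation (part (i)) gives $\text{Cov}(\mathbf{z}_u)=\mathbf{I}$ and zero mean, so each squared-norm term equals $\operatorname{tr}(\mathbf{I})=d$. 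For the cross term, use the Toeplitz structure (part (ii)) with $|u-(u-1)|=1$: $\mathbb{E}[\mathbf{z}_u\mathbf{z}_{u-1}^\top]=\rho\mathbf{I}$. Taking the trace gives $\mathbb{E}\langle\mathbf{z}_u,\mathbf{z}_{u-1}\rangle=\rho d$. Alternatively, substitute the recursion in Eq.~\eqref{eq:ans_recursion} directly, noting $\boldsymbol{\epsilon}_u$ is independent of $\mathbf{z}_{u-1}$; this gives $\rho\,\mathbb{E}\|\mathbf{z}_{u-1}\|^2=\rho d$.

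Each summand therefore equals $d+d-2\rho d=2(1-\rho)d$, independent of $u$. Summing over the $f-1$ adjacent pairs yields $\mathcal{E}_T(\rho)=2(f-1)(1-\rho)d$.

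Monotonicity is then immediate. The derivative $d\mathcal{E}_T/d\rho=-2(f-1)d$ is strictly negative whenever $f\ge 2$ and $d\ge1$, so on $[-1,1]$ the maximum is attained at $\rho=-1$, where $\mathcal{E}_T(-1)=4(f-1)d$. Comparing with $\mathcal{E}_T(0)=2(f-1)d$ confirms that every $\rho<0$ exceeds the i.i.d.\ baseline.

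The only delicate point is the stationarity needed in the first step: $\mathbb{E}\|\mathbf{z}_{u-1}\|^2=d$ must hold for every $u$. I would establish this by a short induction from Eq.~\eqref{eq:ans_init} using Eq.~\eqref{eq:cov_derivation}, which relies on the independence of $\boldsymbol{\epsilon}_u$ from $\mathbf{z}_{u-1}$. I would also note the degenerate endpoint $\rho=-1$, where $\mathbf{z}_u=-\mathbf{z}_{u-1}$ deterministically and the formula still holds. Everything else is routine trace arithmetic.
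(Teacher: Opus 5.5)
Your proposal is correct and follows the paper's proof essentially step for step. You expand each adjacent difference, use the unit marginal covariance to get $d$ for each squared norm, and use the recursion together with the independence of $\boldsymbol{\epsilon}_u$ from $\mathbf{z}_{u-1}$ to get the cross term $\rho d$; summing gives $2(f-1)(1-\rho)d$, and the constant negative derivative yields strict monotonicity and the maximum $4(f-1)d$ at $\rho=-1$. Your extra remarks, namely the inductive justification that $\mathbb{E}\|\mathbf{z}_u\|_2^2=d$ for every $u$ and the check of the degenerate endpoint $\rho=-1$, make explicit what the paper leaves implicit.
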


\begin{proof}
We define the adjacent-difference energy $\mathcal{E}_T(\rho)$ as the expectation of the squared $L_2$ norm of the first-order temporal differences:
\begin{equation}
    \mathcal{E}_T(\rho) \triangleq \mathbb{E} \left[ \sum_{u=1}^{f-1} \|\mathbf{z}_u - \mathbf{z}_{u-1}\|_2^2 \right] = \sum_{u=1}^{f-1} \mathbb{E} \left[ \|\mathbf{z}_u\|_2^2 + \|\mathbf{z}_{u-1}\|_2^2 - 2\mathbf{z}_u^\top \mathbf{z}_{u-1} \right].
\label{eq:energy_expansion}
\end{equation}

\paragraph{Marginal Variance.} From the properties of the AR(1) process (Proposition \ref{prop:ans_properties}), each frame $\mathbf{z}_u$ follows a standard Gaussian distribution $\mathcal{N}(\mathbf{0}, \mathbf{I}_d)$. Thus:
\begin{equation}
    \mathbb{E}\|\mathbf{z}_u\|_2^2 = \text{Tr}(\text{Cov}(\mathbf{z}_u)) = \text{Tr}(\mathbf{I}_d) = d.
\label{eq:marginal_var}
\end{equation}

\paragraph{Cross-Correlation.} Based on the recursion $\mathbf{z}_u = \rho \mathbf{z}_{u-1} + \sqrt{1-\rho^2} \boldsymbol{\epsilon}_u$, we compute the expectation of the inner product:
\begin{align}
    \mathbb{E}[\mathbf{z}_u^\top \mathbf{z}_{u-1}] &= \mathbb{E}[(\rho \mathbf{z}_{u-1} + \sqrt{1-\rho^2} \boldsymbol{\epsilon}_u)^\top \mathbf{z}_{u-1}] \\
    &= \rho \mathbb{E}\|\mathbf{z}_{u-1}\|_2^2 + \sqrt{1-\rho^2} \mathbb{E}[\boldsymbol{\epsilon}_u^\top \mathbf{z}_{u-1}].
\end{align}
Given that $\boldsymbol{\epsilon}_u$ is independent of $\mathbf{z}_{u-1}$, the second term vanishes. Substituting Eq.~\eqref{eq:marginal_var} yields:
\begin{equation}
    \mathbb{E}[\mathbf{z}_u^\top \mathbf{z}_{u-1}] = \rho d.
\label{eq:cross_corr_res}
\end{equation}

\paragraph{Total Energy and Monotonicity.} Substituting Eq.~\eqref{eq:marginal_var} and Eq.~\eqref{eq:cross_corr_res} into Eq.~\eqref{eq:energy_expansion}:
\begin{equation}
    \mathcal{E}_T(\rho) = \sum_{u=1}^{f-1} (d + d - 2\rho d) = 2(f-1)(1-\rho)d.
\label{eq:final_energy_form}
\end{equation}
Taking the derivative with respect to $\rho$:
\begin{equation}
    \frac{d \mathcal{E}_T}{d \rho} = -2(f-1)d.
\end{equation}
Since $f > 1$ and $d > 0$, the derivative is strictly negative, $\frac{d \mathcal{E}_T}{d \rho} < 0$, proving strict monotonicity. The maximum occurs at the boundary:
\begin{equation}
    \max_{\rho \in [-1, 1]} \mathcal{E}_T(\rho) = \mathcal{E}_T(-1) = 4(f-1)d.
\end{equation}
\end{proof}

\subsection{Spectral Analysis and Motion Energy}
\label{subsec:proof_psd}

For a stationary AR(1) process, the autocovariance function is given by $R(k) = \rho^{|k|} \mathbf{I}_d$. According to the Wiener–Khinchin theorem, the Power Spectral Density (PSD) is the Discrete-Time Fourier Transform (DTFT) of the autocovariance sequence:
\begin{equation}
    S_\rho(\omega) = \sum_{k=-\infty}^{\infty} \rho^{|k|} e^{-j\omega k} = 1 + \sum_{k=1}^{\infty} (\rho e^{-j\omega})^k + \sum_{k=1}^{\infty} (\rho e^{j\omega})^k.
\end{equation}
Using the geometric series sum formula $\sum_{k=1}^{\infty} r^k = \frac{r}{1-r}$ for $|r| < 1$, we obtain:
\begin{equation}
    S_\rho(\omega) = 1 + \frac{\rho e^{-j\omega}}{1 - \rho e^{-j\omega}} + \frac{\rho e^{j\omega}}{1 - \rho e^{j\omega}}.
\end{equation}
Applying Euler's formula and simplifying the expression:
\begin{equation}
    S_\rho(\omega) = \frac{1 - \rho^2}{1 + \rho^2 - 2\rho \cos\omega}.
\label{eq:psd_derivation}
\end{equation}

\paragraph{High-Pass Characteristics.} 
To characterize the spectral shape in the antiphase regime ($\rho < 0$), we evaluate the PSD at the Nyquist frequency $\omega = \pi$ and the DC component $\omega = 0$:
\begin{equation}
    S_\rho(\pi) = \frac{1-\rho}{1+\rho}, \quad S_\rho(0) = \frac{1+\rho}{1-\rho}.
\end{equation}
As $\rho \to -1$, the spectral density $S_\rho(\pi) \to \infty$ while $S_\rho(0) \to 0$, confirming that the noise energy is concentrated in the highest temporal frequencies.

\paragraph{Energy Quantization via Parseval's Identity.}
The intensity of temporal variations can be quantified by the expected squared norm of the adjacent-difference signal $\Delta \mathbf{z}_t = \mathbf{z}_t - \mathbf{z}_{t-1}$. This operation is equivalent to passing the noise through a linear filter with frequency response $H(\omega) = 1 - e^{-j\omega}$, where $|H(\omega)|^2 = 2(1 - \cos\omega)$. According to Parseval's identity, the expected adjacent-difference energy per dimension is:
\begin{equation}
    \mathcal{E}_T = \frac{1}{d} \mathbb{E} \|\mathbf{z}_t - \mathbf{z}_{t-1}\|_2^2 = \frac{1}{2\pi} \int_{-\pi}^{\pi} |H(\omega)|^2 S_\rho(\omega) d\omega.
\end{equation}
Substituting $|H(\omega)|^2$ and $S_\rho(\omega)$ reveals that for $\rho < 0$, the peak response of the difference operator aligns with the maximum power density of the noise at $\omega = \pm \pi$. This alignment maximizes the high-frequency temporal variance, thereby providing the necessary gradients to initialize dynamic trajectories and preventing the sequence from converging to a static mean during the early stages of denoising.

\section{Verification of Temporal Attention Sink}
\label{sec:attn_sink}
In this section, we provide empirical evidence for the effectiveness of temporal attention sink in extending the generation horizon of pretrained models. The experiment adopts Self-Forcing as baseline, whose generation performance rapidly decreases beyond its training range of 5 seconds. We modifies the inference pipeline of Self Forcing, and keep the first ($N=3$) frames in the local attention window. All samples are generated at a resolution of $480 \times 832$ with an extended sequence of 30 seconds.

\begin{figure}[h]
    \centering
    \includegraphics[width=0.95\linewidth]{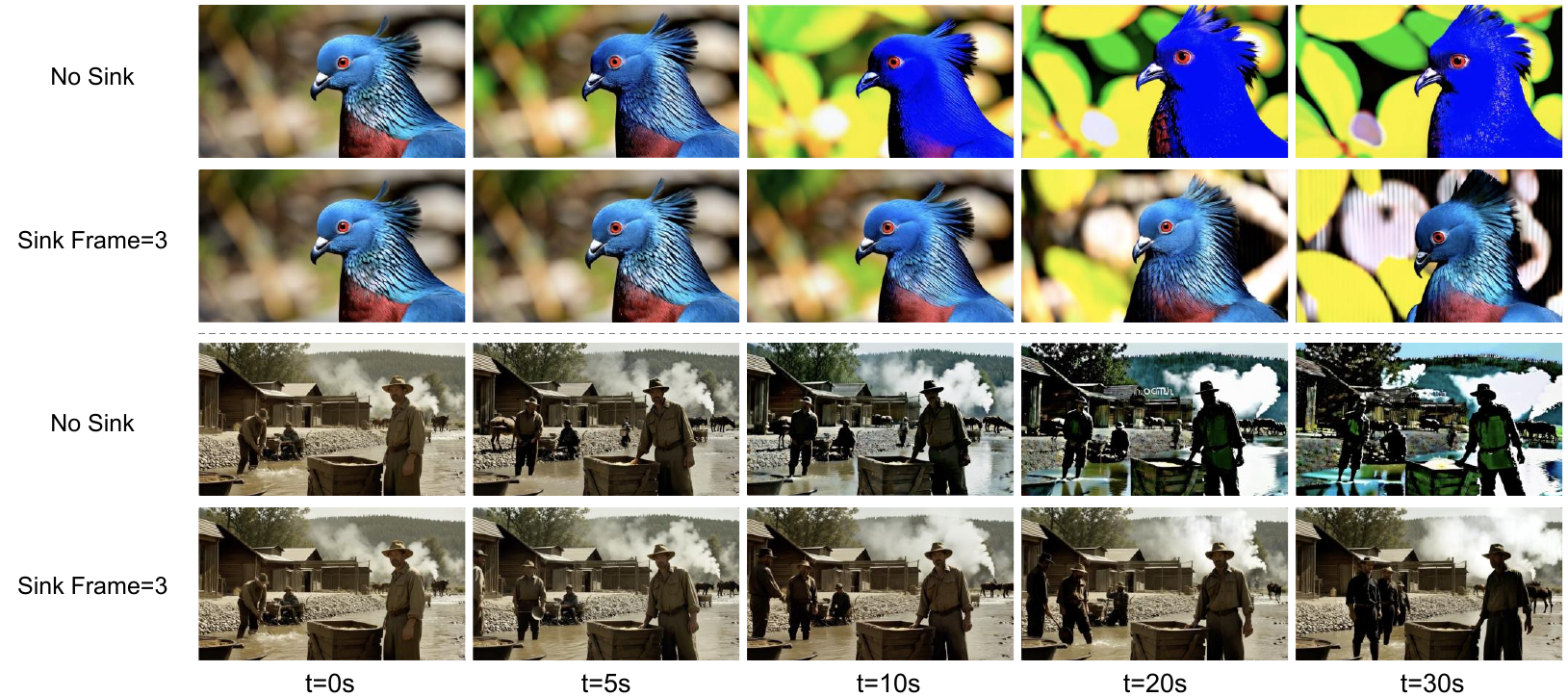}
    \caption{Comparison of generation results between the original Self Forcing (no attention sink) and with 3 latent frame as sinks during inference.}
    \label{fig:attn_sink}
\end{figure}

\paragraph{Visual Consistency and Horizon Extension.} As illustrated in Figure~\ref{fig:attn_sink}, the original Self Forcing baseline exhibits rapid color over-saturation and prominent visual artifacts as the generation progresses. These phenomena indicate a significant deficiency in the model's extrapolation capability when operating beyond its 5-second training horizon. Upon introducing attention sink ($N=3$) during inference, these quality degradation issues are visibly mitigated, and the structural stability of the frames is preserved over longer extrapolation lengths.

However, we also observe that attention sink alone does not fully resolve the degradation in generation quality. In the first example, while the overall structure remains intact, the color and morphological features of the bird undergo noticeable shifts, accompanied by structural noise in the background. Similarly, in the second example, the accumulation error persists with sink tokens, leading to color bias and over-saturation. These observations suggest that while attention sink serves as a vital semantic anchor, achieving complete temporal consistency requires the collaboration with frequency-aware RoPE and designed sampling strategies, as discussed in Section~\ref{sec:ntk} and Section~\ref{sec:ans}.

\section{Detailed VBench-Long Benchmark Evaluation}
\label{app:vbench_long}

To provide a comprehensive assessment of our method's performance in long horizon video generation, we conducted an exhaustive evaluation across all 16 dimensions of VBench-Long Benchmark. 

\paragraph{Experimental Setup} 
Our evaluation utilizes the standard extended prompts from VBench, comprising 946 text prompts. For each prompt, we generate videos with a duration of \textit{30 seconds} using 5 different random seeds to ensure statistical robustness. Following the standard VBench pipeline, each generated video gets sliced and then evaluated across all the 16 dimensions. These dimensions are categorized into Quality Evaluation and Semantic Evaluation. We report the final Quality Score and Semantic Score, which are aggregated from their sub-dimensions with normalization and weighted calculation. This calculation process follows the standardized normalization formulas and official weight coefficients provided by the VBench suite.

\paragraph{Full Quantitative Results} 
Table~\ref{tab:vbench_quality_full} and Table~\ref{tab:vbench_semantic_full} compare our method with three bassline both on the quality and semantic score. Our proposed method consistently outperforms the baselines in most key metrics, achieving a superior comprehensive performance across semantic and visual quality.

\begin{table*}[t!]
    \centering
    \caption{Full quality evaluation on VBench-Long.}
    \label{tab:vbench_quality_full}
    \setlength{\tabcolsep}{2pt} 
    \small
    \resizebox{0.85\textwidth}{!}{
    \begin{tabular}{l|ccccccc|c}
        \toprule
        \textbf{Model} & \makecell{Subject \\ Consistency} & \makecell{Background \\ Consistency} & \makecell{Temporal \\ Flickering} & \makecell{Motion \\ Smoothness} & \makecell{Aesthetic \\ Quality} & \makecell{Imaging \\ Quality} & \makecell{Dynamic \\ Degree} & \makecell{Quality \\ Score} \\ 
        \midrule
        Self Forcing & 98.58 & 97.06 & 98.93 & 98.56 & 61.80 & 68.77 & 29.38 & 81.94 \\
        LongLive & 98.36 & 97.22 & 99.35 & 98.82 & 64.13 & 68.60 & 39.84 & 83.39 \\
        Rolling Forcing & 98.25 & 97.30 & 99.13 & 98.65 & 63.92 & 70.82 & 40.52 & 83.57 \\
        Ours & \textbf{98.70} & \textbf{97.94} & \textbf{99.59} & 98.53 & \textbf{65.43} & 69.11 & \textbf{44.32} & \textbf{84.17} \\
        \bottomrule
    \end{tabular}
    }
\end{table*}

\begin{table*}[t!]
    \centering
    \caption{Full semantic evaluation on VBench-Long.}
    \label{tab:vbench_semantic_full}
    \setlength{\tabcolsep}{2pt} 
    \small
    \resizebox{0.95\textwidth}{!}{
    \begin{tabular}{l|ccccccccc|c}
        \toprule
        \textbf{Model} & \makecell{Object \\ Class} & \makecell{Multiple \\ Objects} & \makecell{Human \\ Action} & Color & \makecell{Spatial \\ Relationship} & Scene & \makecell{Appearance \\ Style} & \makecell{Temporal \\ Style}  & \makecell{Overall \\ Consistency} & \makecell{Semantic \\ Score} \\ 
        \midrule
        Self Forcing & 90.76 & 81.35 & 94.70 & 79.38 & 72.74 & 50.46 & 21.50 & 22.91 & 25.23 & 76.42 \\
        LongLive & 94.44 & 87.14 & 96.80 & 88.80 & 79.24 & 57.72 & 20.56 & 24.22 &26.43 &80.85 \\
        Rolling Forcing & 94.69 & 84.84 & 95.15 & 86.63 & 76.34 & 57.79 & 20.81 & 23.75 & 26.49 & 79.86\\
        Ours & \textbf{94.94} & 85.95 & 95.45 & 88.39 & \textbf{82.04} & 56.41 & 20.51 & 24.09 & \textbf{26.67}  & 80.73\\   
        \bottomrule
    \end{tabular}
    }
\end{table*}

To provide a clear view of our method's performance, we present a normalized radar chart in Figure~\ref{fig:vbench_radar}, which aggregates results across all 16 dimensions of the VBench-Long evaluation.

\begin{figure}[t]
    \centering
    \includegraphics[width=0.85\linewidth]{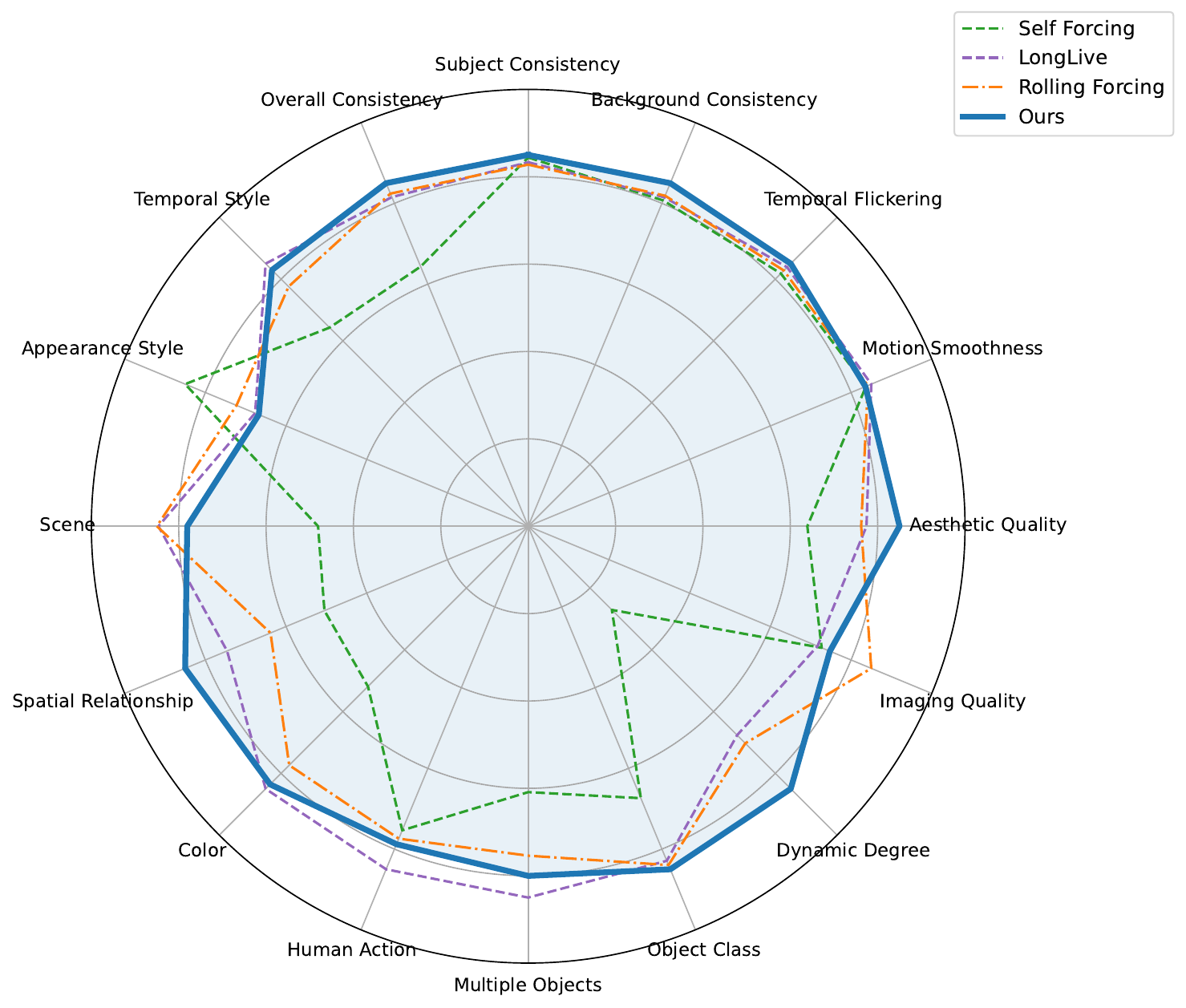}
    \caption{\textbf{Comprehensive performance profile on VBench-Long.} The radar chart visualizes the relative performance across 16 dimensions. Our method demonstrates a more balanced and expansive profile compared to existing baselines, indicating its robust capability in handling diverse long horizon generation challenges.}
    \label{fig:vbench_radar}
\end{figure}

\section{Additional Qualitative Comparisons}
\label{appendix:extended_qualitative}

As supplementary materials of qualitative analysis in Section~\ref{sec:Qualitative Results}, this section provides extended qualitative comparisons on generation results over 30-second and 60-second durations across a diverse range of prompts.

\subsection{30-second Video Generation Results}
In this section, we provide four additional 30-second video generation cases across distinct thematic categories. These results further demonstrate the generalization capability and temporal stability of our method compared to \textit{Self-Forcing}, \textit{LongLive}, and \textit{Rolling Forcing} at intervals of $t=0s, 5s, 10s, 20s, 30s$.

\begin{figure*}[htbp]
    \centering
    \includegraphics[width=0.95\linewidth]{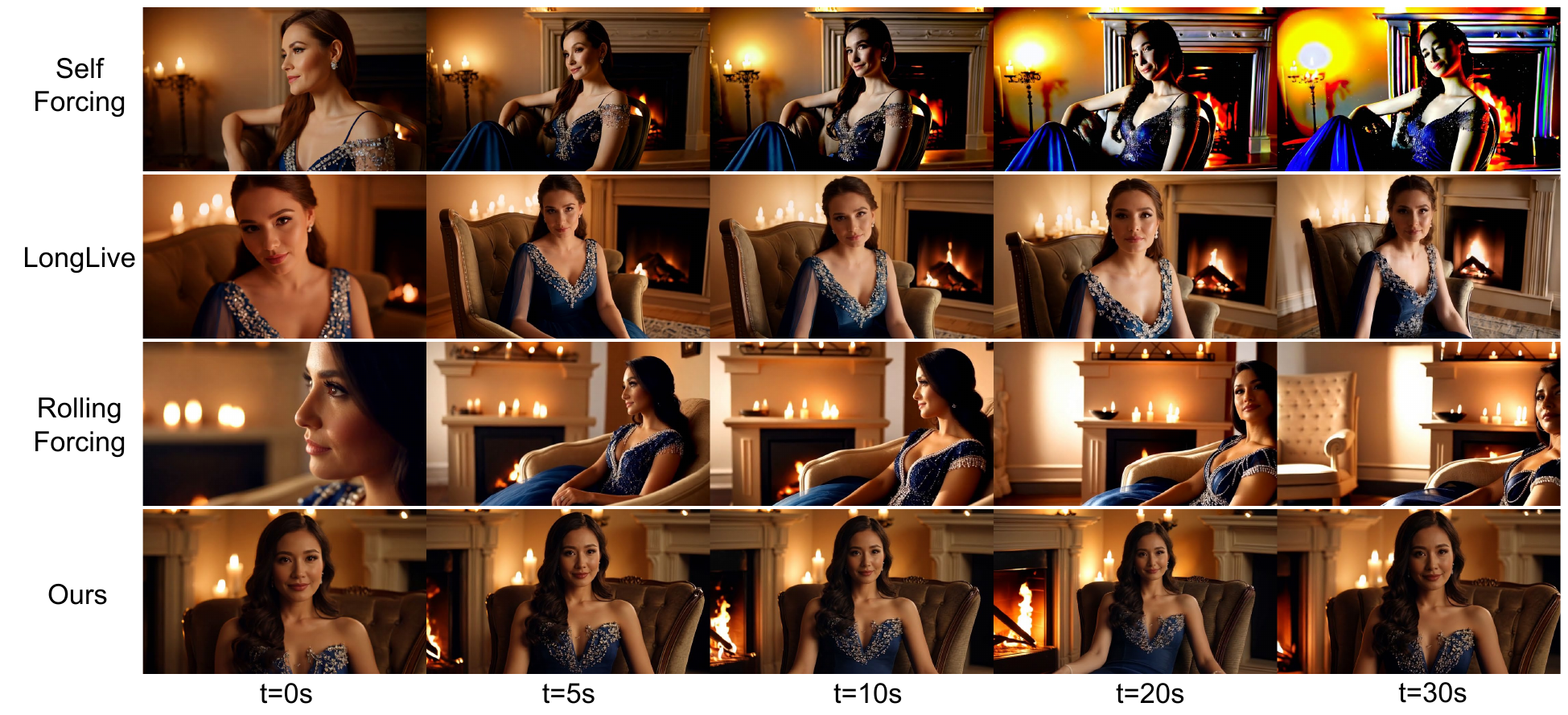}
    \caption{\textbf{Human portrait.} Our method exhibits robust appearance consistency. Key features, including facial identity, clothing ornaments, and texture details, remain stable throughout the sequence, whereas baselines show progressive identity fading. }
    \label{fig:app_30s_1}
\end{figure*}

\begin{figure*}[htbp]
    \centering
    \includegraphics[width=0.95\linewidth]{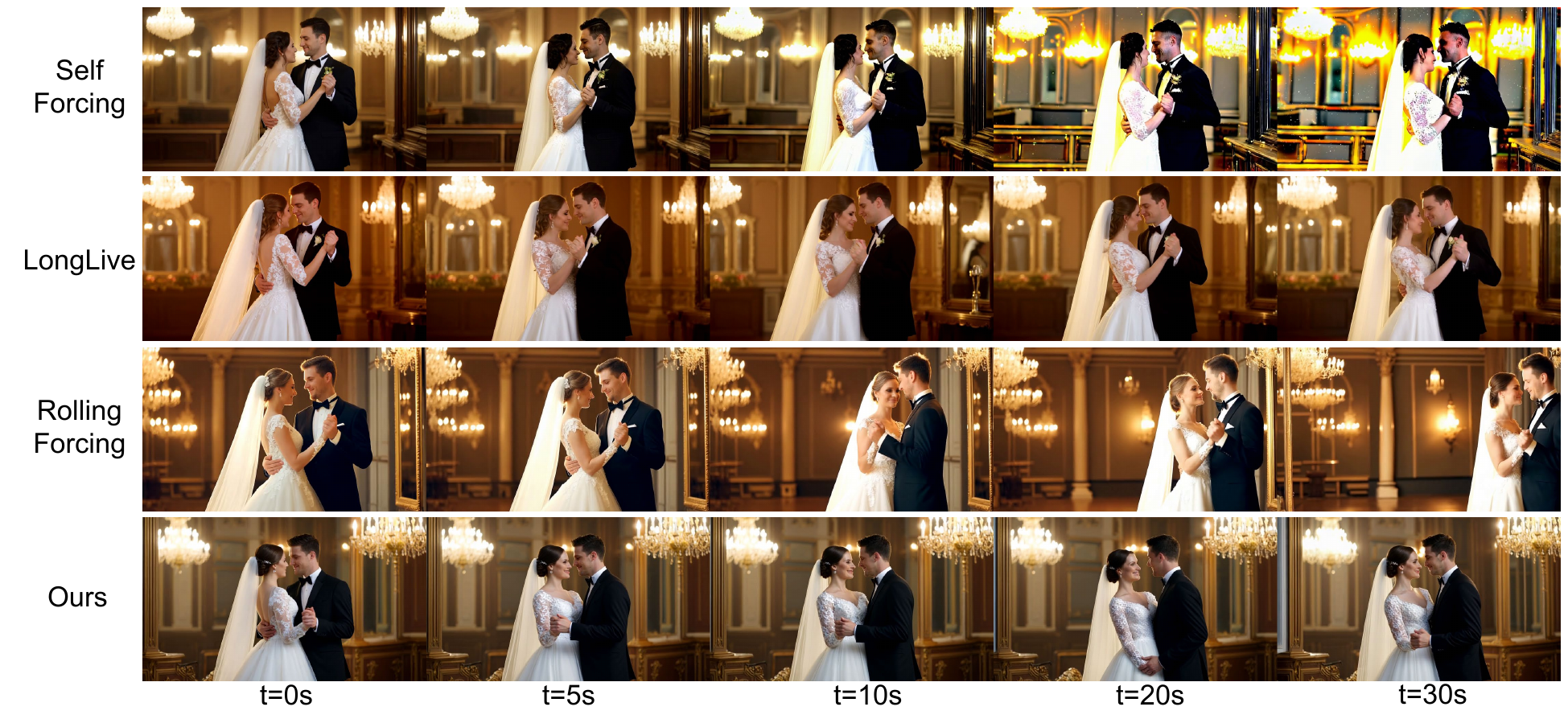}
    \caption{\textbf{Scene}. Our method maintains better temporal consistency and  visual quality over 30s. Notably, \textit{Rolling Forcing} tends to significant subject drift, where the characters always gradually moves towards the frame edge and abruptly reappears in the center. }
    \label{fig:app_30s_2}
\end{figure*}

\begin{figure*}[htbp]
    \centering
    \includegraphics[width=0.95\linewidth]{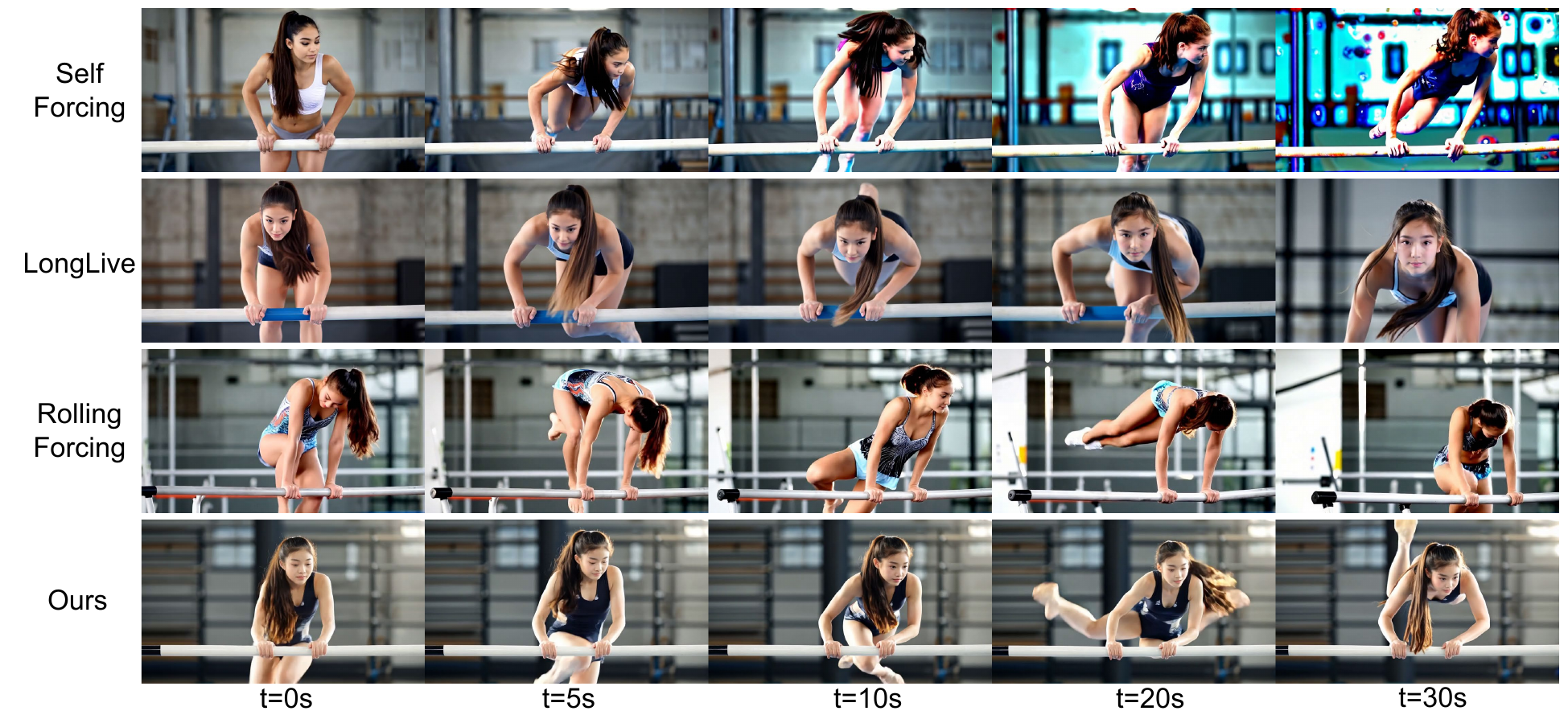}
    \caption{\textbf{Sport \& Motion}. Our method preserves high-intensity motion without sacrificing identity stability. The hair filaments and clothing consistency remain consistent in our results, whereas baselines suffer from progressive attribute drift as the generation horizon extends.}
    \label{fig:app_30s_3}
\end{figure*}

\begin{figure*}[htbp]
    \centering
    \includegraphics[width=0.95\linewidth]{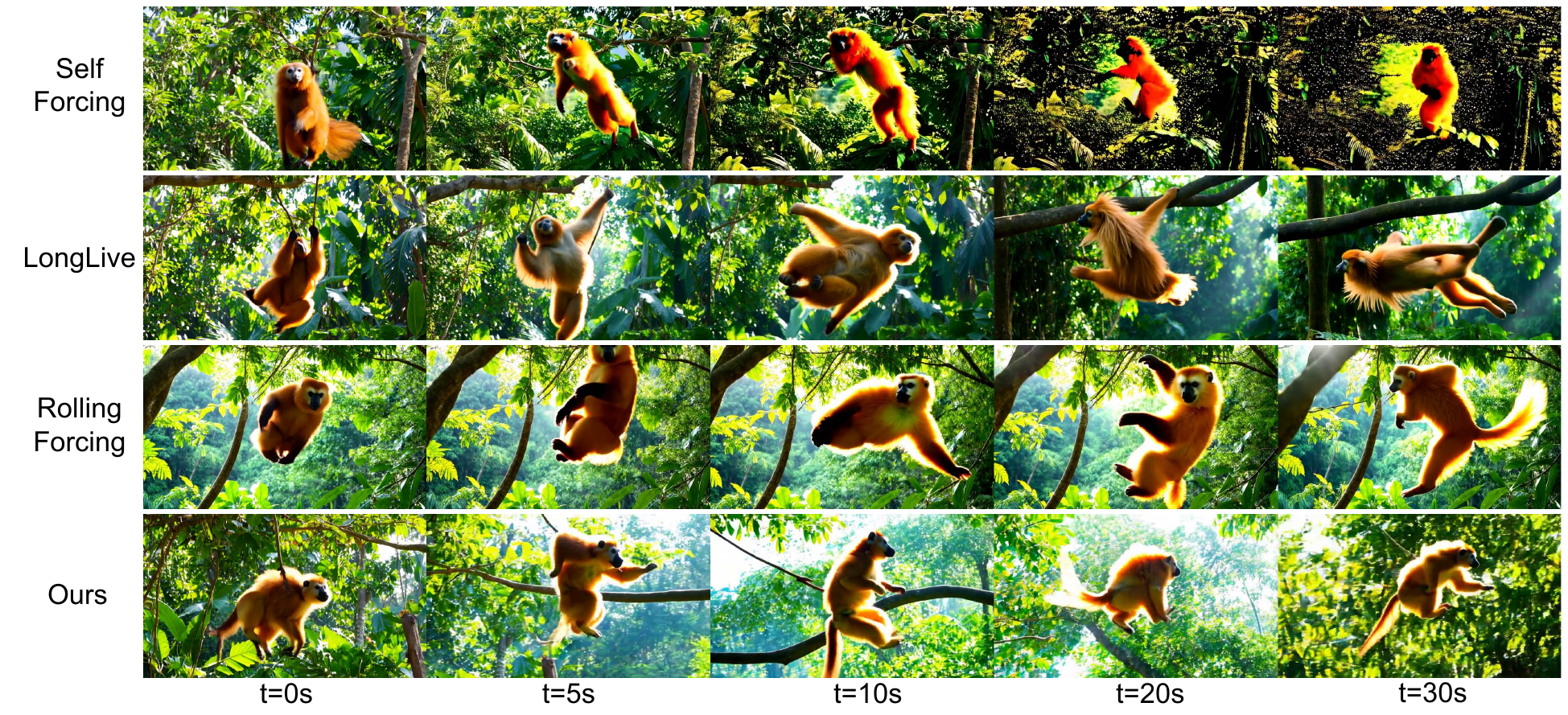}
    \caption{\textbf{Animals \& Nature}. The results showcase large-scale movements without appearance fading during the entire 30-second duration.}
    \label{fig:app_30s_4}
\end{figure*}

As illustrated in Figures~\ref{fig:app_30s_1}–Figures~\ref{fig:app_30s_4}, our method significantly enhances the extrapolation capability of the base Self-Forcing model, effectively preventing the rapid image collapse typically observed beyond the 5s training length. Furthermore, our approach demonstrates competitive or even superior performance across diverse scenarios when compared with recent training-based methods such as \textit{LongLive}, particularly in terms of character consistency and motion dynamics.

\subsection{Long horizon 60-second Synthesis}


In addition, we evaluate our method on 60-second durations. This setting typically leads to catastrophic failure in traditional long video paradigms due to accumulation error.

\begin{figure*}[htbp]
    \centering
    \includegraphics[width=\linewidth]{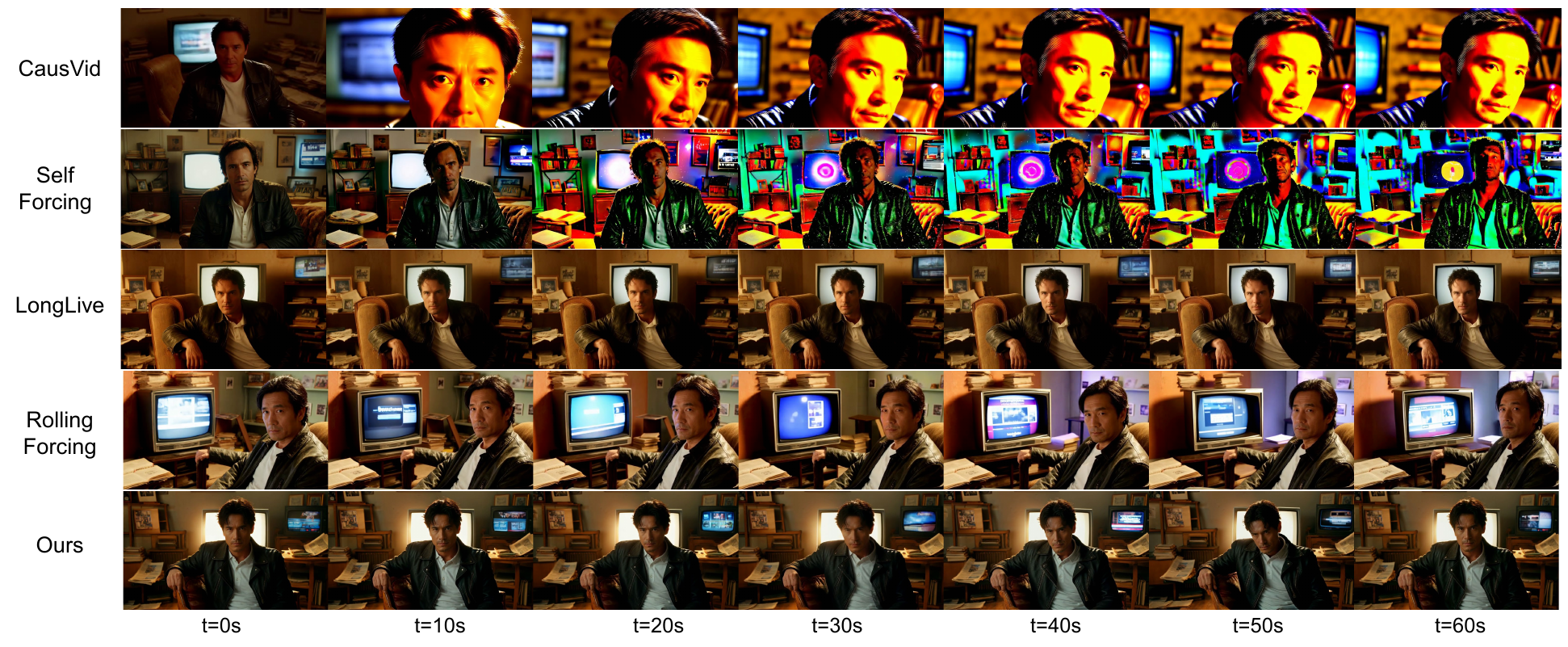}
    \caption{\textbf{60s Character \& Scene}: In this extended indoor sequence, \textit{CausVid} and \textit{Self-Forcing} exhibit immediate image collapse and color distortion. While \textit{Rolling Forcing} maintains human identify, it suffers from severe the background light drift. Our method preserves subject and background consistency from $t=0s$ to $t=60s$.}
    \label{fig:app_60s_1}
\end{figure*}

\begin{figure*}[htbp]
    \centering
    \includegraphics[width=\linewidth]{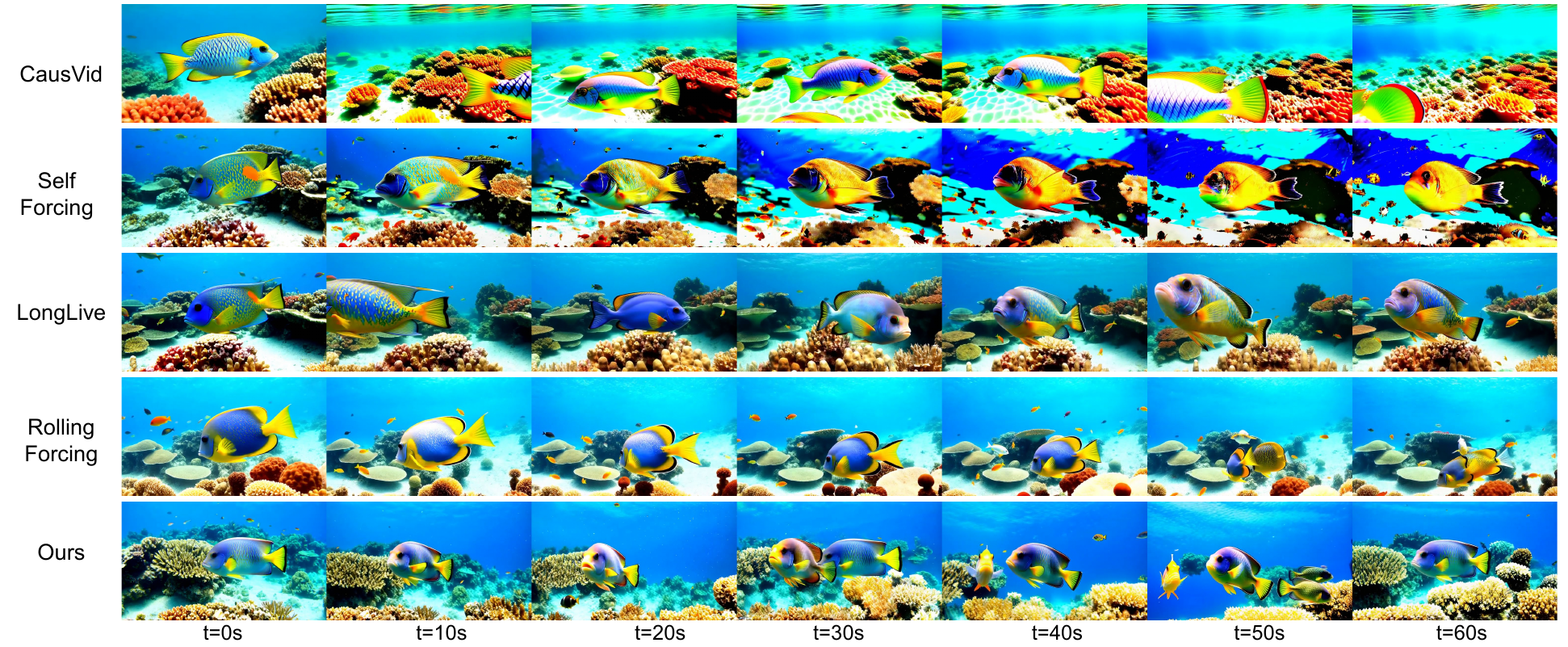}
    \caption{\textbf{60s Marine World \& Dynamics}: Our approach avoids the static freeze in Self-Forcing and identity fading seen in baselines like LongLive, ensuring the object fish's species-specific appearance and the coral environment remain high-fidelity throughout the full minute.}
    \label{fig:app_60s_2}
\end{figure*}

As demonstrated in Figures~\ref{fig:app_60s_1} and \ref{fig:app_60s_2}, at an extended duration of 60s, our method also achieves a stable enhancement in extrapolation capability of Self Forcing, exhibiting performance comparable to training-based approaches such as \textit{LongLive} that are specifically optimized on long-video samples.

\section{Implementation Details on LongLive Integration}
\label{sec:appendix_longlive}

To evaluate the model-agnostic nature of FLEX, we integrate it into the LongLive framework for ultra-long (240s) synthesis. We adopt the official LongLive repository settings and inference pipeline as our base. Notably, LongLive is fine-tuned on 60-second sequences, corresponding to a training horizon of $L_{train} \approx 240$ latent frames, which are 10 times longer than Self Forcing ($L_{train} = 21$).

\paragraph{Module Integration and Adaptation.} Since LongLive already incorporates an attention sink mechanism, we only replace its original 3D RoPE and noise sampling with our proposed components:

\begin{itemize}
\item \textbf{RoPE Modulation}: Given the extended $L_{\text{train}}$ of LongLive, the model already possesses higher exposure to lower-frequency positional signals. We therefore adjust our frequency-aware hyper-parameters to $\alpha=1.0$ and $\beta=15.0$ to ensure proper frequency-aware interpolation over the 240s horizon.

\item \textbf{Noise Sampling}: We utilize Antiphase Noise Sampling (ANS) with $\rho=-1.0$. This replaces LongLive's default noise sampling to alleviates repetitive motion patterns and cyclic artifacts in $4\times$ training-length extrapolation.
\end{itemize}

\end{document}